\documentclass{article}
\usepackage{algorithm}
\usepackage{algorithmic}
\usepackage{wrapfig}
\usepackage{makecell}
\newcommand{\shadecolumn}{\cellcolor{gray!20}}
\newcommand{\shadecolumnlight}{\cellcolor{gray!10}}
\usepackage[table]{xcolor}   % for \cellcolor and \columncolor
\usepackage{array}           % for \newcolumntype
\newcolumntype{s}{>{\columncolor{gray!10}}c}
\newcolumntype{g}{>{\columncolor{gray!20}}l}
\newcommand\independent{\protect\mathpalette{\protect\independenT}{\perp}}
\def\independenT#1#2{\mathrel{\rlap{$#1#2$}\mkern2mu{#1#2}}}

\PassOptionsToPackage{numbers, compress}{natbib}

\usepackage[preprint]{neurips_2026}

\usepackage[utf8]{inputenc} % allow utf-8 input
\usepackage[T1]{fontenc}    % use 8-bit T1 fonts
\usepackage{hyperref}       % hyperlinks
\usepackage{url}            % simple URL typesetting
\usepackage{booktabs}       % professional-quality tables
\usepackage{amsfonts}       % blackboard math symbols
\usepackage{nicefrac}       % compact symbols for 1/2, etc.
\usepackage{microtype}      % microtypography
\usepackage{xcolor}         % colors

\usepackage{graphicx}
\usepackage{subcaption}
\usepackage{amsmath}
\usepackage{amssymb}
\usepackage{mathtools}
\usepackage{amsthm}
\usepackage{bbm}
\usepackage{math_commands}
\usepackage{verbatim}
\usepackage{enumitem}
\newtheorem{theorem}{Theorem}[section]
\title{On the Error-Correcting Effects of Stochasticity in Discrete Diffusion}
\author{William Yuan \\
  Georgia Institute of Technology \\
  \texttt{wyuan65@gatech.edu} \\
  \And
  Sungwon Jeong \\
  Georgia Institute of Technology \\
  \texttt{sjeong304@gatech.edu} \\
  \And
  Amirali Aghazadeh \\
  Georgia Institute of Technology \\
  \texttt{aaghazadeh3@gatech.edu} 
}

\begin{document}

\maketitle

\begin{abstract}
    Discrete diffusion models achieve strong performance in text and image generation, but their inference remains slow and must inherently balance sampling efficiency and sample quality. In this work, we present a systematic study of how the \emph{degree of stochasticity} in Markov transitions governs the sampling tradeoff. We show that highly deterministic transitions converge rapidly but suffer from error accumulation, while more stochastic transitions converge more slowly yet can achieve higher final sample quality. Using an information-theoretic analysis, we identify the underlying mechanism as an error-correcting effect induced by \emph{redundant transitions} that symmetrically exchange mass between states, and show that these transitions can provably contract sampling errors. Motivated by this analysis, we propose \emph{Discrete Churn and Restart Sampling} (DCRS), a novel inference algorithm that injects controlled stochasticity by alternating between forward and reverse diffusion processes. Experiments on synthetic datasets and large-scale benchmarks show that DCRS improves the speed-quality tradeoff in the low number of function evaluations regime. On image datasets, DCRS achieves up to a $10\times$ reduction in sampling steps compared to standard samplers while maintaining competitive sample quality, whereas on language benchmarks, we observe more nuanced behavior depending on the corruption process and sampling procedure.
\end{abstract}

\begin{figure*}[t]
    \centering
    \includegraphics[width=\linewidth]{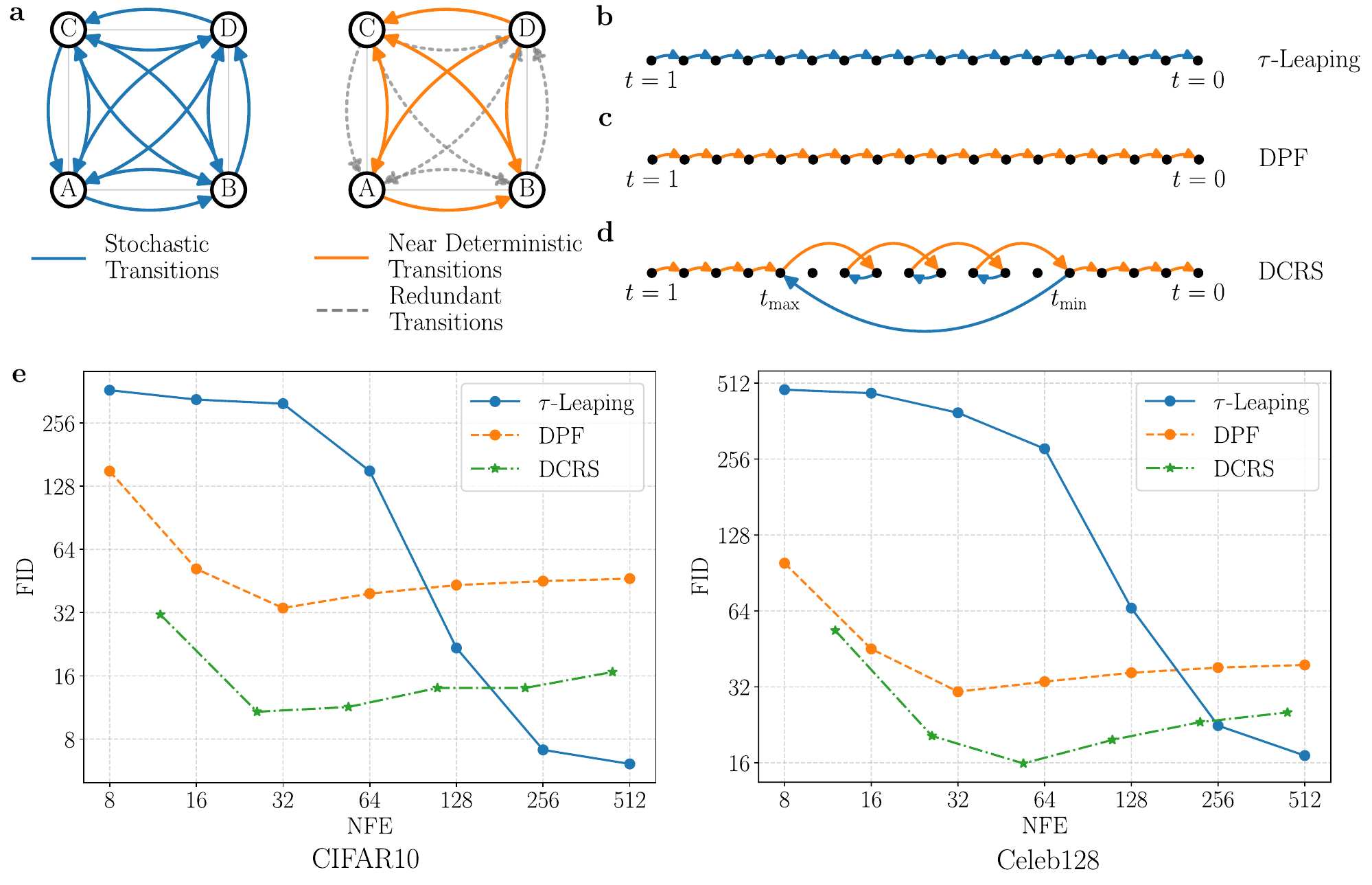}
    \vspace{-2em}
    \caption{{\bf Stochasticity induces error correction in discrete diffusion, enabling an improved speed-quality tradeoff.} \textbf{(a)} Near-deterministic transitions (orange) obtained by removing redundant transitions from the stochastic kernel (blue), resulting in faster but less error-corrective dynamics. \textbf{(b)} Sampling with fully stochastic transitions, where redundant transitions redistribute probability mass between states and help correct accumulated errors. \textbf{(c)} Discrete Probability Flow (DPF), which removes redundant transitions to produce nearly deterministic updates that accelerate convergence but are more sensitive to model errors. \textbf{(d)} Our proposed Discrete Churn and Restart Sampling (DCRS), which alternates between near-deterministic reverse updates and injected forward-process noise. Restarting introduces stochasticity using the forward process without additional neural network evaluations, while churning provides local corrections. \textbf{(e)} Empirical results on CIFAR10 and CelebA illustrating the speed-quality tradeoff: DCRS achieves improved sample quality in the low number of function evaluations (NFE) regime.}
    \label{fig:overview}
    \vspace{-2em}
\end{figure*}
\section{Introduction}
Discrete diffusion models, a class of generative models that iteratively refine samples via discrete-space Markov chains~\citep{MD,D3PM,TLDR}, have achieved competitive performance in natural language~\citep{SEDD,MDLM} and image~\citep{VQDiffusion,Simplified} generation. A central challenge in discrete diffusion is understanding the tradeoff between sample quality and sampling speed. Despite its practical importance, this tradeoff is poorly studied and is widely believed to depend on the degree of stochasticity of the generation process. 

Existing intuitions are largely heuristic. Masking-based transitions are irreversible during sampling and are therefore thought to be susceptible to error accumulation~\citep{SCUD,MaskedCorrection}, while highly stochastic uniform transitions are thought to exhibit self-correcting behavior~\citep{UDLM,Duality}. However, comparisons across transition types are inherently confounded by differences in learned denoising networks. Even under a fixed denoising model, prior work has treated stochasticity as a hyperparameter for improving sample quality, yet its underlying role in error propagation remains poorly understood~\citep{DDIM,ReMDM,DFOT,MMF,DFM}.

In this work, we show that stochasticity plays a fundamental and previously underexplored role in discrete diffusion inference. Specifically, we demonstrate a tradeoff: highly deterministic transitions enable faster convergence but achieve lower sample quality, whereas more stochastic transitions converge more slowly to higher sample quality.  Crucially, we show that this behavior arises from an \emph{error-correcting mechanism} induced by \emph{redundant transitions} between state pairs (Figure~\hyperref[fig:overview]{1a}), which symmetrically exchange probability mass without altering the time marginal distributions. Using an information-theoretic analysis, we show that these transitions can provably contract errors in the KL divergence. This perspective provides a principled explanation for when and why stochasticity improves generation quality, beyond prior heuristic arguments.

These results unify a broader principle. Stochasticity improves generation by mixing away errors, but at the cost of slower sampling. While similar tradeoffs have been observed in continuous diffusion models~\citep{DDIM,SDE,EDM,ODESDE,ProbabilityODEProof}, we show that in the discrete setting, the strength of error correction depends critically on the \emph{type and degree of stochasticity} feasible under the chosen corruption process.

Motivated by this understanding, we propose \emph{Discrete Churn and Restart Sampling} (DCRS), a training-free inference algorithm that injects stochasticity in a controlled manner. DCRS alternates between near-deterministic reverse steps and strategically introduced forward-process noise through two mechanisms: (1) \emph{churning}, which performs local forward-reverse perturbations, and (2) \emph{restarting}, which periodically resets the sampling trajectory using forward transitions (Figure~\hyperref[fig:overview]{1d}). Notably, the restart steps introduce stochasticity via the forward process \emph{without additional neural network evaluations}, enabling efficient global error correction across the trajectory.

We evaluate DCRS on discrete diffusion models trained on large-scale image and language datasets. We find that DCRS substantially improves the speed-quality tradeoff in the low number of function evaluations (NFE) regime on image benchmarks, achieving comparable sample quality with significantly fewer sampling steps (Figure~\hyperref[fig:overview]{1e}). On language tasks, we observe more nuanced behavior, indicating that the strength of error-correcting stochasticity may depend on the domain and biases in the sampling procedure.

Our contributions in this work are as follows:
\begin{itemize}[leftmargin=10pt]
\item We provide a systematic characterization of how stochasticity governs the sampling speed-quality tradeoff in discrete diffusion models. Furthermore, we show that stochasticity can correct for \emph{model approximation errors} at the cost of slower sampling, while reducing stochasticity accelerates sampling but weakens error correction.
\item We identify an error-correcting mechanism induced by redundant transitions and show, via an information-theoretic analysis, that adding such transitions can provably contract sampling errors. 
\item We develop DCRS, a sampling algorithm that exploits the error-correcting effects of stochasticity by alternating the forward and reverse processes to enable fast generation in the low-NFE regime.
\end{itemize}
\section{Background}
\noindent {\bf Continuous-Time Discrete Diffusion Models.} Discrete diffusion models operate on spaces $\gS^D$, where $\gS = \{1,\ldots,S\}$ is a finite state space and $D$ denotes the sequence dimension. These models define a forward Markov process over time $t \in [0,1]$ that evolves a data distribution $p_0$ toward a limiting distribution $p_1$. The time marginals $\{p_t\}_{t\in[0,1]}$ are governed by a time-inhomogeneous continuous-time Markov chain (CTMC) with rate matrix $R_t \in \mathbb{R}^{S^D \times S^D}$ satisfying $\frac{dp_t}{dt} = R_t^Tp_t$. A generative model is obtained by simulating the corresponding reverse-time process starting from $p_1$. 

The reverse rate matrix $\hat{R}_t$ must satisfy the detailed balance relation, $\hat{R}_t(x,y) = \frac{p_t(y)}{p_t(x)} R_t(y,x),$ which ensures the reversed process follows the time marginal distributions $p_t$~\citep{Norris}. The probability ratio $\frac{p_t(y)}{p_t(x)}$ is commonly referred to as the discrete score function and can be learned either directly by estimating
$s_t^\theta(i)_j \approx \frac{p_t(y)}{p_t(x)}$~\citep{SEDD,CSM} or indirectly by learning the posterior distribution
$p_{0\mid t}^\theta \approx q_{0\mid t}$ and then recovering the discrete score via $\frac{p_t(y)}{p_t(x)} = \sum_{x_0} 
\frac{q_{t\mid 0}(y \mid x_0)}{q_{t\mid 0}(x \mid x_0)} q_{0\mid t}(x_0 \mid x)$~\citep{D3PM,TLDR}. Sampling from the CTMC reversal can be performed by discretizing time, the details of which we leave to Appendix \ref{section:background}. Since both the estimate of the discrete score function and the method of sampling are inexact, sampling accuracy depends on two distinct sources of error. Understanding the role of each source of error is the focus of our analysis.

\noindent {\bf Choice of corruption process.} Discrete diffusion models admit flexibility in the choice of $R_t$; however, for computational tractability over large state spaces, rate matrices with low-rank structure are typically used. Specifically, we focus on rate matrices defined for single coordinates $R_t = \beta_t(\mathbf{1}\vpi^T - I)$, where $\vpi \in \mathbb{R}^S$ denotes the stationary distribution of the corruption process and $\beta_t$ denotes the noise schedule. Two commonly used stationary distributions we study in this work are the uniform and masking distributions, $\vpi = \frac{1}{S}\mathbf{1}$ and $\vpi = \ve_M$.

\noindent {\bf Adjusting stochasticity at inference time.} Although fully deterministic sampling is generally infeasible in discrete settings, the reverse process can be made nearly deterministic~\citep{DFOT,MMF,DFM}. The Discrete Probability Flow (DPF) formulation removes \emph{redundant transitions} between state pairs:
\begin{align}
    \hat{R}_{t,\mathrm{DPF}}(x,y) 
    &= \bigl(\tfrac{p_t(x)}{p_t(y)}R_t(x,y) - R_t(y,x)\bigr)_+,
    \label{eq:DPF}
\end{align}
where $(\cdot)_+ = \max\{\cdot, 0\}$. 
Importantly, the learned discrete score function $\frac{p_t(y)}{p_t(x)}$ can be reused without any retraining. This construction eliminates symmetric transitions that exchange probability mass between states, resulting in faster but more deterministic dynamics. Although these redundant transitions do not affect the marginal evolution of $p_t$, we show that they play a critical role in correcting sampling errors. More generally, stochasticity can be reintroduced by adding any valid rate matrix $\hat{R}_{t,\mathrm{DB}}$ that satisfies the detailed balance relation:
\begin{align}
    \hat{R}_{t,\nu_t} &= \hat{R}_{t,\mathrm{DPF}} + \nu_t\hat{R}_{t,\mathrm{DB}},
    \label{eq:corrector}
\end{align}
where $\nu_t \geq 0$ controls the level of additional stochasticity~\citep{MMF}.

Under an exact discrete score, all such choices of $\nu_t$ produce the same marginal distributions. In practice, model approximation errors break this equivalence and hence $\nu_t$ must be tuned to balance sample quality and sampling efficiency.

\vspace{-.5em}
\section{Explaining Error Correction}
\vspace{-.5em}
\label{section:tradeoff}
A central question in discrete diffusion is: \emph{why does stochasticity improve sample quality, despite slowing down sampling?} Empirically, we observe a consistent tradeoff: less stochastic samplers converge faster but are sensitive to accumulated errors, while more stochastic samplers converge more slowly yet often achieve lower final error. This suggests that stochasticity plays a dual role, simultaneously affecting both convergence speed and error propagation. 

We show that this behavior is a consequence of how stochasticity interacts with different forms of sampling error across NFE regimes: (i) \textbf{Discretization errors (low-NFE)}: Increasing stochasticity increases the number of transitions per unit time, leading to larger discretization errors when using finite step sizes. (ii) \textbf{Model approximation errors (high-NFE)}: Stochastic transitions mix probability mass across states, contracting distributional errors and mitigating sampling errors. This section formalizes this tradeoff and characterizes the error-correcting role of stochasticity.

\vspace{-.5em}
\subsection{Information-Theoretic Analysis}
\label{section:infotheory}
\vspace{-.5em}
We first address how stochasticity affects sampling in the absence of model approximation error. In the low-NFE regime, increasing stochasticity leads to larger discretization error: Euler steps using rate $R$ are known to incur $O(\Delta t^2\Vert R\Vert^2)$ error locally. Intuitively, higher transition rates induce more frequent state changes, making finite-step approximations less accurate. This reasoning is consistent with prior work~\citep{TLDR,StochasticIntegral}, which provides global convergence guarantees in distribution with discretization errors that scale with the largest entries of $\hat{R}_t$. 

In the high-NFE regime, discretization errors are negligible, so improvements in sample quality due to stochasticity must arise from differences in how \emph{model approximation errors} accumulate. This suggests that stochasticity must play some role in enabling robustness to general errors.

To analyze error propagation, we focus on the one-dimensional setting, where $q_s$ is a perturbed distribution relative to the true marginal $p_s$. A common intuition is that a noisy channel $q_{t\mid s}$ can only destroy distinguishability between inputs $p_s$ and $q_s$. This is captured through the data processing inequality, $D_{KL}(q_{t\mid s}q_s\Vert q_{t\mid s}p_s) \leq D_{KL}(q_s\Vert p_s)$, which implies that errors \emph{cannot become worse}. More importantly, under specific inputs and channels, the divergence can \emph{strictly contract}~\citep{SDPI,SDPIRaginsky}. This motivates studying the tightest possible contraction coefficient
\begin{align*}
    \eta_{KL}(p_s,q_{t\mid s}) &= \sup_{q_s:D_{KL}(q_s\Vert p_s) < \infty}\frac{D_{KL}(q_{t\mid s}q_s \Vert q_{t\mid s}p_s)}{D_{KL}(q_s \Vert p_s)},
\end{align*}
which quantifies the worst-case error propagation through the transition kernel, where $\eta_{KL}(q_{t\mid s}) = \sup_{p_s}\eta_{KL}(p_s,q_{t\mid s})$ takes the worst case over all input distributions. A natural question under this theoretical formulation is whether some corruption processes are more contractive. We show that this is indeed the case in Theorem \ref{thm:contraction}:
\begin{theorem}[Forward Process Contraction]
    \label{thm:contraction}
    Let $p_s$ be the ground-truth time marginal distribution and $q_s$ be a perturbed distribution. Then 
    \begin{align*}
        D_{KL}(q_{t\mid s}q_s\Vert q_{t\mid s}p_s) &\leq \eta_{KL}(p_s,q_{t\mid s})D_{KL}(q_s\Vert p_s), 
    \end{align*} 
    where for masking diffusion we have $\eta_{KL}(p_s,q_{t\mid s}^M) = \frac{\alpha_t}{\alpha_s}$, whereas for uniform diffusion we have $\eta_{KL}(p_s,q_{t\mid s}^U) \leq \frac{S(\frac{\alpha_t}{\alpha_s})^2}{(S-2)\frac{\alpha_t}{\alpha_s}+2}$. In particular, for forward processes of the form $\vx_t = \alpha_t\vx_0 + (1-\alpha_t)\vpi$, masking achieves the worst-case contraction rate for all forward processes we consider.
\end{theorem}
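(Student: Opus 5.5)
The first inequality is immediate from the definition of $\eta_{KL}(p_s,q_{t\mid s})$ as a supremum, so the work is in computing or bounding the coefficient for each channel. Both channels have the form $q_{t\mid s} = \lambda I + (1-\lambda)\mathbf{1}\vpi^T$ acting on row-distributions, with $\lambda = \alpha_t/\alpha_s$. I would treat the two cases separately and then compare them.

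\textbf{Masking.} Here $q_{t\mid s}^M q = \lambda q + (1-\lambda)\ve_M$, i.e.\ the channel is an erasure channel with erasure probability $1-\lambda$. The plan is a direct chain-rule computation.
\begin{itemize}[leftmargin=10pt]
\item Suppose first that $p_s$ puts no mass on $M$. The output splits into the unmasked part, which is $\lambda q$ versus $\lambda p$, and the mask atom, which carries $1-\lambda$ under both inputs. Hence
\[
D_{KL}(q_{t\mid s}^M q\Vert q_{t\mid s}^M p)=\lambda D_{KL}(q\Vert p)
\]
exactly, for every admissible $q$, and therefore $\eta_{KL}=\lambda$.
\item If $p_s$ already has mass on $M$, the ratio should be at most $\lambda$, by joint convexity of KL on the mass-$M$ term. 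Equality is approached by perturbations $q_s$ that leave $M$ alone, so the supremum is still $\lambda$.
\end{itemize}

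\textbf{Uniform.} I would use the standard sandwich
\[
\eta_{KL}(p,K)\le \eta_{KL}(K)\le \eta_{TV}(K),
\]
or better a $\chi^2$-based bound, to get a closed form.
\begin{itemize}[leftmargin=10pt]
\item The Dobrushin coefficient alone gives $\eta_{TV}=\lambda$, which is not the claimed expression. So a sharper tool is needed.
\item The target $\frac{S\lambda^2}{(S-2)\lambda+2}$ is the known contraction coefficient of the $S$-ary symmetric channel. I would obtain it through the less-noisy / $\chi^2$ route. First write $q_{t\mid s}^U = \lambda I + \frac{1-\lambda}{S}J$, which is doubly stochastic and has minimal entry $\frac{1-\lambda}{S}$.
\item Next, bound the KL ratio by $\sup_{p}\eta_{\chi^2}$ using the inequality of Makur–Polyanskiy for symmetric channels, $\eta_{KL}(W)\le \frac{\eta_{\chi^2}}{\dots}$-type bounds, or their explicit result for $q$-ary symmetric channels.
\item Concretely, I expect to invoke the result that for $W=\lambda I+\frac{1-\lambda}{S}J$ one has $\eta_{KL}(W)\le \frac{\lambda^2}{\lambda^2+\ldots}$. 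Rearranging should give $\frac{S\lambda^2}{(S-2)\lambda+2}$.
\item Sanity checks: at $S=2$ this yields $\lambda^2$, the binary symmetric channel value. As $S\to\infty$ it tends to $\lambda$, matching the erasure-like behaviour.
\end{itemize}

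\textbf{Worst-case claim.} Since $(S-2)\lambda+2 \ge S\lambda$ iff $2\ge 2\lambda$, the uniform bound is $\le\lambda$. More generally, for any $\vpi$ the channel $\lambda I+(1-\lambda)\mathbf{1}\vpi^T$ has Dobrushin coefficient exactly $\lambda$, so $\eta_{KL}\le\lambda$ for every such process. Masking attains this bound, which establishes that it is the worst case.

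The main obstacle is the uniform case. Deriving the exact symmetric-channel contraction constant from scratch requires optimizing a KL ratio over all input pairs, so I would rely on the published Makur–Polyanskiy bound for $q$-ary symmetric channels rather than re-derive it. The one thing to verify carefully is that their parametrization maps to $\lambda=\alpha_t/\alpha_s$ as written.
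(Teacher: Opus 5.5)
Your proposal is correct and follows the paper's proof in all three parts. For masking, you use the erasure structure directly by splitting the output KL over the mask atom, while the paper uses the mutual-information characterization of $\eta_{KL}$ with the erasure indicator $B=\mathbf{1}_{X_t=M}$; for uniform, you cite the known $S$-ary symmetric (Potts) channel SDPI bound, which the paper takes from Proposition 38 of its Potts reference; and for the worst-case claim, you use the Dobrushin coefficient $\eta_{TV}=\alpha_t/\alpha_s$ of every channel $\lambda I+(1-\lambda)\mathbf{1}\pi^T$. Your joint-convexity treatment of $p_s(M)>0$ is more careful than the paper's argument, whose step $I(U;B)=0$ implicitly assumes $X_s$ never equals $M$, which fails at any intermediate time $s$ where $p_s$ already has mass on the mask.
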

\vspace{-.5em}
The theorem shows that the error-contracting effects of stochasticity depend on the choice of transition kernel. Especially when the size of the state space $S$ is small, uniform diffusion exhibits stronger contraction than masking diffusion, where $\eta_{KL}(p_s,q_{t\mid s}^U) < \eta_{KL}(p_s,q_{t\mid s}^M)$. We further demonstrate that masking diffusion achieves the worst-case contraction rate for all inputs and the forward processes considered here (see Appendix~\ref{section:theory} for proof).

While the forward process is contractive toward a stationary distribution by design, it is less obvious whether similar guarantees hold for the reverse process used in sampling. Surprisingly, we show that reverse-time transitions also exhibit error contraction in Theorem \ref{thm:errorcorrection}, which states:
\begin{theorem}[Diffusion Error Contraction]
\label{thm:errorcorrection}
Consider a diffusion process with noise schedule $\alpha_t$ and define $\sigma_{s\mid t}:=\frac{1-\alpha_s}{1-\alpha_t}$.
Let $q_{s\mid t}^{\mathrm{DPF}}$ denote the reverse transition kernel corresponding to the DDIM/DPF update in Equation~(\ref{eq:DDIM}). Then DPF sampling satisfies $\eta_{KL}(p_t,q_{s\mid t}^{\mathrm{DPF}})
\le
\sigma_{s\mid t} + (1-\sigma_{s\mid t})\eta_{KL}(p_t,p_{0\mid t}).$
For masking diffusion, $\eta_{KL}(p_t,q_{s\mid t}) = 1$ and the bound is tight, whereas for uniform diffusion $\eta_{KL}(p_t,q_{s\mid t}) < 1$. Now consider a stochasticity schedule $\nu_t$ with corresponding reverse kernel $q_{s\mid t}^{\nu_t}$. This kernel can be expressed as a composition of a DPF step $q_{s\mid t}^{\mathrm{DPF}}$ and corrector kernels $q_{s\mid t}^b$ and $q_{t\mid s}^f$. Consequently, $\eta_{KL}(p_t,q_{s\mid t}^{\nu_t})
\le
\eta_{KL}(p_t,q_{s\mid t}^{\mathrm{DPF}})
\eta_{KL}(q_{s\mid t}^{b})
\eta_{KL}(q_{t\mid s}^{f})$. 
\end{theorem}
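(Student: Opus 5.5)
The plan is to write the DPF/DDIM reverse kernel as a mixture and then use convexity of KL. For forward processes of the form $\vx_t=\alpha_t\vx_0+(1-\alpha_t)\vpi$, the DDIM update in Equation~(\ref{eq:DDIM}) decomposes as
\begin{align*}
q_{s\mid t}^{\mathrm{DPF}}(\cdot\mid x_t)=\sigma_{s\mid t}\,\delta_{x_t}+(1-\sigma_{s\mid t})\,K(\cdot\mid x_t),
\end{align*}
with $K(\cdot\mid x_t)=\sum_{x_0}p_{0\mid t}(x_0\mid x_t)\,q_{s\mid 0}(\cdot\mid x_0)$. In words, with probability $\sigma_{s\mid t}$ the sampler keeps its state, and otherwise it resamples through the posterior followed by a forward step. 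I will first check this decomposition from Equation~(\ref{eq:DDIM}) by matching coefficients.

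Pushing $q_t$ and $p_t$ through this kernel gives
\begin{align*}
q_{s\mid t}q_t=\sigma_{s\mid t}\,q_t+(1-\sigma_{s\mid t})\,Kq_t,
\end{align*}
and the same formula with $p_t$ in place of $q_t$. Joint convexity of KL then yields
\begin{align*}
D_{KL}(q_{s\mid t}q_t\Vert q_{s\mid t}p_t)\le \sigma_{s\mid t}\,D_{KL}(q_t\Vert p_t)+(1-\sigma_{s\mid t})\,D_{KL}(Kq_t\Vert Kp_t).
\end{align*}
Since $K=q_{s\mid 0}\circ p_{0\mid t}$, the data processing inequality applied to $q_{s\mid 0}$ gives $D_{KL}(Kq_t\Vert Kp_t)\le D_{KL}(p_{0\mid t}q_t\Vert p_{0\mid t}p_t)\le \eta_{KL}(p_t,p_{0\mid t})\,D_{KL}(q_t\Vert p_t)$. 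Dividing by $D_{KL}(q_t\Vert p_t)$ and taking the supremum gives the stated bound.

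\textbf{Masking case.} We need $\eta=1$. Take $q_t$ supported on the mask state, perturbed only in its unmasked mass, or more simply take $q_t$ and $p_t$ differing only on masked tokens. The DPF kernel then keeps the mask with probability $\sigma_{s\mid t}$ and moves it according to $p_{0\mid t}(\cdot\mid M)$, which is identical under both inputs. A cleaner choice is to perturb only the relative weights among unmasked states: DPF fixes unmasked tokens, so the divergence is preserved exactly and the ratio equals $1$. This also shows tightness, since $\eta_{KL}(p_t,p_{0\mid t})=1$ by the same construction.

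\textbf{Uniform case.} Strict contraction should follow because the kernel has full support: every row has positive mass on every state. The Dobrushin coefficient is then below $1$, and $\eta_{KL}\le$ Dobrushin $<1$. I would show that $K$, which ends in the uniform forward kernel $q_{s\mid 0}^U$, has strictly positive entries. I would then combine this with Theorem~\ref{thm:contraction}, either through convexity or by noting that the mixture kernel remains strictly positive.

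\textbf{Composition.} Decompose $q_{s\mid t}^{\nu_t}$ as DPF followed by a forward corrector $q^f_{t\mid s}$ and a backward corrector $q^b_{s\mid t}$. These correctors preserve the marginals, mapping $p_s\to p_t\to p_s$. Applying the contraction inequality successively, each time with the correct intermediate true marginal, gives
\begin{align*}
\eta_{KL}(p_t,q_{s\mid t}^{\nu_t})\le \eta_{KL}(p_t,q^{\mathrm{DPF}}_{s\mid t})\,\eta_{KL}(p_s,q^f_{t\mid s})\,\eta_{KL}(p_t,q^b_{s\mid t}),
\end{align*}
and the input-specific coefficients on the right are each bounded by their worst-case versions.

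The main obstacle is justifying this factorization of the $\nu_t$-kernel. For the CTMC with $\hat R_{\mathrm{DPF}}+\nu_t\hat R_{\mathrm{DB}}$, an exact factorization holds only by Trotter splitting, or exactly for the discrete-time DDIM-with-$\eta$ construction. I would define the discrete-time kernel as that composition, which is how the statement is phrased, and argue via marginal preservation.
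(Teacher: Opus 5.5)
You have the paper's approach for the general bound: split the kernel into "stay" and "jump", apply joint convexity, then the SDPI of $p_{0\mid t}$. Your composition bound also matches the paper's. The masking step, however, fails.

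\textbf{The masking step.} Under the DPF kernel the mask state is not inert. With probability $1-\sigma_{s\mid t}$ it jumps to $x_0\sim p_{0\mid t}(\cdot\mid M)=p_0$, and this mass is identical under both inputs. Write distributions as (unmasked part, mass on $M$). Take $p_t=(\alpha_t p_0,\,1-\alpha_t)$ and your perturbation $q_t=(\alpha_t\tilde p,\,1-\alpha_t)$. Then $q_s=(\alpha_t\tilde p+(\alpha_s-\alpha_t)p_0,\,1-\alpha_s)$ and $p_s=(\alpha_s p_0,\,1-\alpha_s)$. With $\lambda=\alpha_t/\alpha_s$, strict convexity gives $D_{KL}(q_s\Vert p_s)=\alpha_s D_{KL}(\lambda\tilde p+(1-\lambda)p_0\Vert p_0)<\alpha_t D_{KL}(\tilde p\Vert p_0)=D_{KL}(q_t\Vert p_t)$. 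So the ratio is strictly below $1$, and the same dilution breaks your claim that $\eta_{KL}(p_t,p_{0\mid t})=1$.

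A common component cancels in an $\ell_1$ difference but not in KL. That is why the paper argues in TV: it swaps the mass of two unmasked states to get $\eta_{TV}(p_{0\mid t})=1$, then appeals to a cited SDPI characterization for KL.

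No perturbation can rescue the KL version with input $p_t$. When $p_t(M)>0$ and $\sigma_{s\mid t}<1$, the support graph of the DPF channel is connected through $M$. Its maximal correlation is then below $1$, so on a finite alphabet $\eta_{KL}(p_t,q^{\mathrm{DPF}}_{s\mid t})<1$ (Ahlswede--G\'acs). What you can actually prove is $\eta_{TV}=1$, or $\eta_{KL}(q^{\mathrm{DPF}}_{s\mid t})=1$ for the worst-case input: take an input supported on two unmasked states, where the kernel is the identity.

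\textbf{The kernel decomposition.} Equation~(\ref{eq:DDIM}) jumps directly to the posterior sample: $q^{\mathrm{DPF}}_{s\mid t}(\cdot\mid x_t)=\sigma_{s\mid t}\delta_{x_t}+(1-\sigma_{s\mid t})p_{0\mid t}(\cdot\mid x_t)$, with no trailing $q_{s\mid 0}$. Your $K$ fails the coefficient check you proposed: pushing $p_t$ through it puts weight $(2-\sigma_{s\mid t})(1-\alpha_s)$ on $\pi$ rather than $1-\alpha_s$. It also contradicts your own remark that DPF fixes unmasked tokens. This is harmless for the convexity bound, since the extra data-processing step simply disappears. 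In the uniform case, though, the row overlap must come from the posterior itself: $p_{0\mid t}(\cdot\mid x_t)=f_t(x_t)\delta_{x_t}+(1-f_t(x_t))p_0$ with $f_t(x_t)<1$ because $\pi(x_t)>0$. This is exactly the paper's bound $\eta_{TV}(p_{0\mid t})\le\Vert f_t\Vert_\infty<1$.

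On composition, the paper orders the corrector as a step toward the posterior followed by mixing with $\pi$ (Eqs.~(\ref{eq:bwdpc})--(\ref{eq:fwdpc})). Your order is reversed, but the product bound is unaffected.
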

The result is twofold. First, it demonstrates that near the data distribution where the posterior is nearly deterministic, error contraction is significantly worse. Second, it shows that increasing stochasticity can reduce the contraction coefficient, correcting for perturbations from previous iterations. 
However, as we show next through an analysis in TV distance, this comes at the cost of amplifying model approximation errors in later iterations. 
\begin{theorem}[Diffusion Error Bound]
\label{thm:errorbound} Consider reverse-time sampling with a learned posterior estimator $p^\theta_{0\mid t}$ and discretization
$1=t_1 > t_2 > \cdots > t_{K+1}=0.$ Let $p_t^\theta$ denote the generated marginal distribution and $p_t$ the ground-truth marginal distribution. Define the posterior estimation error at step $k$ as $\epsilon_k
:=
TV\!\left(
p^\theta_{0\mid t_k}
\,\Vert\,
p_{0\mid t_k}
\right).$ Then the sampling error satisfies the recursion
$
TV(p^\theta_{t_{k+1}}\Vert p_{t_{k+1}})
\le
A_k
TV(p^\theta_{t_k}\Vert p_{t_k})
+
B_k \epsilon_k,$
where \(A_k\) controls contraction of accumulated sampling error, \(B_k\) controls amplification of posterior estimation error. Consequently,
\[
TV(p^\theta_0\Vert p_0)
\le
\sum_{k=1}^K
\left(
\prod_{j=k+1}^K A_j
\right)
B_k \epsilon_k.
\vspace{-.5em}
\]  
For DPF sampling and sampling with stochasticity schedule $\nu_t = \frac{\alpha_t(1-\alpha_s)}{\alpha_s-\alpha_t}$, these coefficients are
\begin{alignat*}{2}
A_k^{\mathrm{DPF}}
&=
\sigma_{t_{k+1}\mid t_k}
+
\left(1-\sigma_{t_{k+1}\mid t_k}\right)
\eta_{TV}(p_{0\mid t_k}),
\qquad&
B_k^{\mathrm{DPF}}
&=
1-\sigma_{t_{k+1}\mid t_k},
\\
A_k^{\nu_t}
&=
\alpha_{t_{k+1}}
\eta_{TV}(p_{0\mid t_k}),
\qquad&
B_k^{\nu_t}
&=
\alpha_{t_{k+1}}.
\end{alignat*}
In particular, if $\eta_{TV}(p_{0\mid t_k}) = 1$, we have $A_k^{\nu_t} \leq A_k^{\mathrm{DPF}}$ and $B_k^{\nu_t} \geq B_k^{\mathrm{DPF}}$.
\end{theorem}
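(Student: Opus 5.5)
The plan is to write each sampling step as one reverse kernel applied to the current marginal, split the error into a propagated part and a fresh posterior-error part, and then unroll. Write one reverse step from $t_k$ to $t_{k+1}$ as a kernel acting on the current marginal. For the forward processes $\vx_t = \alpha_t\vx_0 + (1-\alpha_t)\vpi$, the DDIM/DPF step (Equation~(\ref{eq:DDIM})) decomposes as
\[
q^{\mathrm{DPF}}_{s\mid t}(\cdot\mid x_t) = \sigma_{s\mid t}\,\delta_{x_t} + (1-\sigma_{s\mid t})\sum_{x_0} p_{0\mid t}(x_0\mid x_t)\, K_{s}(\cdot\mid x_0),
\]
where $K_s$ is a fixed "re-noising" channel that is independent of the model. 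Either the token is kept, or it is resampled through the posterior. The learned sampler is the same kernel with $p_{0\mid t}$ replaced by $p^\theta_{0\mid t}$. With $s=t_{k+1}$ and $t=t_k$, I would apply the triangle inequality:
\[
TV(p^\theta_{s},p_{s}) \le TV\bigl(Q^\theta p^\theta_{t},\,Q p^\theta_{t}\bigr) + TV\bigl(Q p^\theta_{t},\,Q p_{t}\bigr).
\]
The second term is the propagated error. The identity branch contributes $\sigma_{s\mid t}$. On the posterior branch, the contraction of $TV$ under the channel $p_{0\mid t}$ gives $\eta_{TV}(p_{0\mid t})$. Post-composition with $K_s$ only contracts further, by data processing in TV. This yields $A_k^{\mathrm{DPF}}$.

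The first term is the fresh error. The two kernels differ only on the posterior branch, so the difference is at most $(1-\sigma_{s\mid t})\,\epsilon_k$, using convexity of TV and interpreting $\epsilon_k$ as the (worst-case or averaged) posterior TV error. This gives $B_k^{\mathrm{DPF}}$. The unrolled bound then follows by induction on $k$. The base case is $TV(p^\theta_{t_1},p_{t_1})=0$, since both processes start from the stationary $p_1$; iterating the recursion gives the stated sum of products.

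For the stochastic schedule $\nu_t=\frac{\alpha_t(1-\alpha_s)}{\alpha_s-\alpha_t}$, I would first verify that this choice turns the reverse kernel into the "full ancestral" step $x_t\to\hat x_0\sim p_{0\mid t}(\cdot\mid x_t)\to x_s\sim q_{s\mid 0}(\cdot\mid\hat x_0)$. Here the state is fully resampled through the posterior and then forward-noised to level $s$. This is the main obstacle. I would try to show it via the composition representation in Theorem~\ref{thm:errorcorrection}, where the kernel is DPF followed by corrector kernels $q^b_{s\mid t}$ and $q^f_{t\mid s}$: computing the total rate, one finds that the identity weight $\sigma_{s\mid t}$ is exactly cancelled at this value of $\nu_t$.

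Given this form, $q_{s\mid 0} = \alpha_s\,\delta + (1-\alpha_s)\vpi$. The $\vpi$ branch is identical for both inputs and therefore cancels in the TV difference. Only the $\alpha_{t_{k+1}}$ fraction carries information, which gives $A_k^{\nu_t}=\alpha_{t_{k+1}}\eta_{TV}(p_{0\mid t_k})$. By the same argument, the model error enters only through this branch, so $B_k^{\nu_t}=\alpha_{t_{k+1}}$.

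For the final comparison, set $\eta_{TV}=1$, so that $A^{\mathrm{DPF}}_k=1\ge \alpha_{t_{k+1}}=A^{\nu_t}_k$. For the $B$ inequality, note that
\[
1-\sigma_{s\mid t}=\frac{\alpha_s-\alpha_t}{1-\alpha_t}\le \alpha_s .
\]
This is equivalent to $\alpha_s-\alpha_t\le\alpha_s-\alpha_s\alpha_t$, i.e.\ $\alpha_s\le 1$, which always holds.
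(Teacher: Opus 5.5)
Your proposal is correct and is essentially the paper's proof: the same split of the DDIM/DPF step into an identity branch and a posterior branch, the same triangle inequality separating $\epsilon_k$ from the propagated error (bounded via $\eta_{TV}(p_{0\mid t_k})$), the same unrolling from $TV(p^\theta_{t_1},p_{t_1})=0$, and, for the special $\nu_t$, the same collapsed update $\alpha_{t_{k+1}}\vx_0(\vx_{t_k})+(1-\alpha_{t_{k+1}})\vpi$, which the paper obtains directly by setting $\sigma_t^{\mathrm{ReMDM}}=1-\alpha_s$ in the one-step update, so your separate rate computation is unnecessary. Minor points: your $K_s$ is simply the identity (Theorem~\ref{thm:dpfddim}), $\epsilon_k$ has to be read as the worst-case (or $p^\theta_{t_k}$-averaged) posterior error for the fresh-error step to go through, and your check that $1-\sigma_{s\mid t}\le\alpha_s$ fills in a step the paper only asserts.
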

\vspace{-.5em}
The bound reveals two competing terms: $A_k$, which contracts accumulated errors and decreases with increased stochasticity, and $B_k$, which amplifies model approximation error and grows with stochasticity. This formalizes the tradeoff observed in practice: stochasticity improves sample quality up to an optimal level, beyond which model error dominates. Lastly, while our results mainly focus on the one-dimensional setting, we outline an approach to extend error bounds to the general setting in Appendix~\ref{section:theory} and show that error contraction is most effective near the noise distribution when input dimensions are mostly independent. 
\vspace{-1em}
\subsection{1D Experimental Validation}
To validate the theoretical predictions, we analyze a controlled 1D discrete diffusion model with 15 states under a uniform corruption process. Using an analytically derived posterior $p_{0\mid t}$, this setting isolates sampling dynamics from model approximation error, allowing us to directly study the effect of stochasticity~\citep{RKT}. We compare $\tau$-leaping using the default stochastic rate matrix and the DPF rate matrix, which we refer to as $\tau$-leaping and DPF sampling, respectively. Figures~\hyperref[fig:1D]{2a} and~\hyperref[fig:1D]{2b} show that, in the absence of model error, DPF achieves lower error faster than $\tau$-leaping, consistent with reduced stochasticity leading to fewer transitions and lower discretization error.

To study error propagation, we introduce model approximation errors by perturbing the score function with multiplicative noise (see Appendix~\ref{section:1D} for details). In this setting, we observe a reversal in behavior: DPF sampling becomes more sensitive to errors and ultimately performs worse than $\tau$-leaping. This supports our theoretical claim that stochastic transitions provide an error-correcting effect that mitigates accumulated errors over time.

\begin{figure*}[t]
    \centering
    \includegraphics[width=\linewidth]{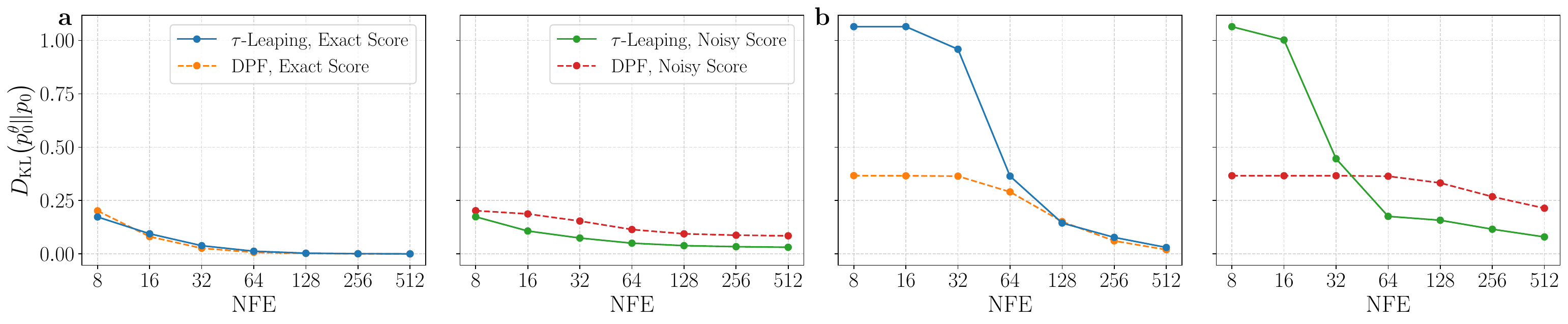}
    \vspace{-1em}
    \caption{{\bf Controlled 1D experiments illustrating the speed-quality tradeoff and error-correcting effect of stochasticity.} We report empirical KL divergence between the ground-truth distribution $p_{0}$ and generated distribution $p_0^{\theta}$ under a uniform corruption process. Results are shown for different noise schedules $\beta_t$: ({\bf a}) linear and ({\bf b}) geometric, and each schedule is evaluated over exact scores (left) and perturbed scores (right). With exact scores, near-deterministic sampling (DPF) converges faster due to reduced stochasticity. Under perturbed scores, stochastic sampling ($\tau$-leaping) achieves lower final error, demonstrating its ability to correct accumulated model errors. The gap is more pronounced under the geometric schedule, which drives $\sigma_{s\mid t}$ closer to $1$ near the noise distribution, weakening DPF error correction while making the benefits of stochastic mixing more apparent.}
    \label{fig:1D}
    \vspace{-1.5em}
\end{figure*}

We further examine the role of the noise schedule in light of our analysis in Theorem~\ref{thm:errorbound}. Under a geometric schedule $\alpha_t$, we expect the DPF contraction factor $\sigma_{s\mid t} \approx 1$ near the noise distribution, consequently leading to greater error accumulation under DPF sampling. As a result, the additional error correction from stochasticity in $\tau$-leaping becomes more pronounced, leading to the observed divergence between $\tau$-leaping and DPF.

Overall, these results illustrate the two competing effects identified in Section~\ref{section:infotheory}: reduced stochasticity improves convergence speed but increases sensitivity to model error, while higher stochasticity increases discretization errors but performs robustly at high NFE due to error-correcting effects.

\vspace{-1em}
\section{Discrete Churn and Restart Sampling (DCRS)}
\vspace{-1em}
Building on the analysis in Section~\ref{section:tradeoff}, we now design a sampling algorithm that explicitly exploits the error-correcting role of stochasticity. Our results show that stochastic transitions contract errors but slow convergence, while near-deterministic transitions accelerate sampling but are prone to error accumulation. This indicates that an effective sampler should combine both behaviors: primarily deterministic updates for efficiency, together with strategically injected stochasticity for error correction. We propose \emph{Discrete Churn and Restart Sampling} (DCRS), an inference algorithm that interleaves near-deterministic reverse updates with controlled injections of forward-process noise. DCRS consists of two complementary mechanisms: \emph{restarting}, which introduces global stochasticity across the trajectory, and \emph{churning}, which provides local error correction within a time window.

\textbf{Restarting.} DCRS initiates the generative process by simulating a reverse trajectory from the stationary distribution at $t=1$ to a predefined time $t_{\min}$ to compute an initial state $x_{t_{\min}}^{0} = \text{DPF}_{\theta}(x_1, 1 \to t_{\min})$, where $\text{DPF}_{\theta}(x, t \!\to\! s)$ denotes running the DPF sampler from state $x$ over the interval $[t,s]$. The algorithm then enters a \emph{restart} phase, alternating between forward-time corruption and reverse-time updates for $i \in \{0,\ldots,K-1\}$ iterations: 
\begin{align*}
    x_{t_{\max}}^{i+1} \sim q_{t_{\max} \mid t_{\min}}(\cdot \mid x_{t_{\min}}^{i}), \text{  and }
    x_{t_{\min}}^{i+1} = \text{Churn}_{\theta}(x_{t_{\max}}^{i+1}, t_{\max} \to t_{\min}).
\end{align*} 
Here, $\text{Churn}_{\theta}(x, t \!\to\! s)$ refers to the corresponding churning update at state $x$ over the interval from $t$ to $s$ (see below). Because the forward transition $q_{t_{\max}\mid t_{\min}}$ admits a closed-form expression, these restart steps introduce stochasticity without additional neural network evaluations or model approximation error, providing an efficient mechanism for error correction across the trajectory.

\textbf{Churning.} While restarting introduces stochasticity to correct errors before the restart, it does not address error accumulation within the restart window. To address this, we introduce a finer-grained mechanism termed \emph{churning}, which injects local stochasticity within each restart window. Given a time discretization $\{t_i\}_{i=1}^{N}$, the forward process advances to an intermediate time $(1+\gamma)t_i$ for some $\gamma \in (0,1)$:
\vspace{-.5em}
\begin{equation*}
    x_{(1 + \gamma)t_{i}} \sim q_{{(1 + \gamma)t_{i}}\mid t_{i}}(\cdot \mid x_{t_{i}}).
\end{equation*}
This injected stochasticity is followed by a single DPF sampling step from $(1+\gamma)t_i$ to $t_{i+1}$. In contrast to full restart iterations, which may involve multiple forward and reverse steps, the churning operation uses only one reverse update with a larger step size to compensate for the temporary forward movement in time. Churning allows for additional correction within the restart window, especially when the error accumulation under DPF sampling is significant. Generation finishes by running $x_0 = \text{DPF}_{\theta}(x_{t_{\text{min}}}^K,t_{\text{min}}\to 0)$. Together, restarting and churning allow DCRS to balance efficiency and error correction. Deterministic reverse steps provide fast convergence, while forward-process perturbations periodically reintroduce stochasticity to contract accumulated errors. Further details about the sampling algorithm can be found in Algorithm~\ref{alg:discrete_restart_sampling} of Appendix~\ref{section:alg}.

\noindent {\bf Impact of hyperparameters.} DCRS introduces three main hyperparameters: the restart interval $[t_{\min}, t_{\max}]$, the number of restart iterations $K$, and the churning parameter $\gamma$. These parameters control where and how stochasticity is injected, and therefore directly govern the tradeoff between error correction and discretization error. In particular, we find that hyperparameter selection strategies from continuous diffusion samplers generally do not transfer well to the discrete setting, due in part to parallel decoding errors across dimensions that are unique to discrete diffusion transitions.

Our theoretical analysis suggests that restarting should be most effective in regions where errors accumulate and DPF contraction is otherwise weak. Empirically, we find that optimal restart windows align with regions where discretization steps are coarsest, suggesting that the added stochasticity is effective at mitigating parallel decoding errors across dimensions.

By the same reasoning, increasing the number of restart iterations exhibits diminishing returns. While additional restarts initially improve sample quality, excessive restarts introduce additional discretization error and can ultimately degrade performance. In practice, we find that the largest marginal improvement is typically achieved after the first restart iteration.

Finally, the churning parameter $\gamma$ controls the magnitude of forward-reverse perturbations within the restart window. We observe that small values of $\gamma \approx 10^{-3}$ consistently yield the best performance, suggesting that beyond a certain scale, the discretization error induced by larger DPF steps outweighs the error-correcting benefits of increased stochasticity.

\noindent {\bf Applying higher-order solvers.} While higher-order solvers reduce discretization error, they do not address model approximation error and can exacerbate error accumulation under near-deterministic dynamics. To improve sample quality in large-scale datasets, we employ a higher-order solver for the DPF equation (Equation (\ref{eq:DPF})) only within the restart window. Specifically, we use the trapezoidal solver outlined in \citet{RKT}, which uses a weighted combination of two reverse rate matrix evaluations to reduce discretization errors. In practice, using higher-order solvers outside the restart window often leads to catastrophic compounding of discretization errors (see Appendix~\ref{section:failure}). To address this, we leverage the error-correcting effects of restarting and noise churning within the restart window. For additional details, we refer to Appendix~\ref{section:alg}. 
\vspace{-1em}
\section{Experiments}
\vspace{-1em}
To validate the theoretical predictions in Section~\ref{section:tradeoff}, we evaluate how stochasticity affects sampling behavior across controlled and large-scale settings. We focus on two questions: (1) whether stochasticity improves sample quality through error correction, and (2) whether DCRS can leverage this effect to improve the speed-quality tradeoff in practice.

\noindent {\bf Image datasets.} We use pretrained model checkpoints from \citet{TLDR} and \citet{DFOT} on CIFAR10 and CelebA, respectively, which employ a discretized Gaussian corruption process. We evaluate the quality of generated samples in terms of Fr\'echet Inception Distance (FID) over 50K samples (Figure~\hyperref[fig:overview]{1e}). For both datasets, we use a restart window of $[t_{\text{min}},t_{\text{max}}] = [0.7, 0.8]$ and test a single restart iteration over varying NFE used outside of the restart window. To demonstrate how DCRS scales with NFE, we proportionally increase the number of discretization steps in the restart window relative to the total number of steps outside the window. We also use $\tau$-leaping outside of the restart window, where a higher-order solver is used inside the window. Outside of the restart window, we also observe that replacing the DPF rate matrix with the modified matrix $\hat{R}_{t,\nu_t}$ with $\nu_t = 0.01$ tends to slightly boost performance (Equation (\ref{eq:corrector})). In practice, we observe that higher-order solvers are ineffective without the DPF rate matrix, and hence we do not provide such samplers as baselines in our results (Appendix~\ref{section:ablation}).

Overall, DCRS substantially improves the speed-quality tradeoff in the low-NFE regime, achieving comparable sample quality with significantly fewer model evaluations (20-30 steps). On CelebA, DCRS matches the best performance of $\tau$-leaping with fewer evaluations, while on CIFAR10, we observe a small gap at high NFE. These results are consistent with our theoretical analysis: reducing stochasticity improves efficiency but increases error accumulation, whereas additional stochasticity can improve robustness to model approximation errors that persist in the high-NFE regime.
\begin{table}[t]
\centering
\footnotesize
\begin{tabular}{lccccccccc}
\toprule
Method 
& \multicolumn{3}{c}{MAUVE ($\uparrow$)} 
& \multicolumn{3}{c}{Gen PPL. ($\downarrow$)} 
& \multicolumn{3}{c}{Entropy ($\uparrow$)} \\
\midrule

% -------- high-NFE --------
\shadecolumn 
& \it{T=256} & \it{T=512} & \it{T=1024}   
& \shadecolumnlight \it{T=256} & \shadecolumnlight \it{T=512} & \shadecolumnlight \it{T=1024}  
& \it{T=256} & \it{T=512} & \it{T=1024} \\

\shadecolumn ReMDM-Cap 
& 0.00635 & 0.00618 & 0.00628 
& \shadecolumnlight 175.44 & \shadecolumnlight 226.54 & \shadecolumnlight 356.62
& 5.749 & 5.797 & 5.875 \\

\shadecolumn ReMDM-Loop 
& 0.00697 & 0.00785 & 0.00714 & 
\shadecolumnlight 135.31 & \shadecolumnlight 141.15 & \shadecolumnlight 175.02 
& 5.710 & 5.719 & 5.758 \\

\shadecolumn MDLM 
& 0.00762 & 0.00770 & 0.00822 
& \shadecolumnlight 112.20 & \shadecolumnlight 106.93 & \shadecolumnlight 105.07 
& 5.653 & 5.640 & 5.635 \\

\midrule

% -------- low-NFE --------
\shadecolumn 
& \it{T=32} & \it{T=64} & \it{T=128} 
& \shadecolumnlight \it{T=32} & \shadecolumnlight \it{T=64} & \shadecolumnlight \it{T=128} 
& \it{T=32} & \it{T=64} & \it{T=128} \\

\shadecolumn ReMDM-Cap 
& 0.00489 & 0.00567 & 0.00603
& \shadecolumnlight 224.55 & \shadecolumnlight 172.19 & \shadecolumnlight 160.28 
& 5.762 & 5.734 & 5.728 \\

\shadecolumn ReMDM-Loop 
& 0.00453 & 0.00501 & 0.00615
& \shadecolumnlight 349.25 & \shadecolumnlight 198.74 & \shadecolumnlight 148.71 
& 5.814 & 5.758 & 5.722 \\

\shadecolumn MDLM 
& 0.00499 & 0.00556 & 0.00745
& \shadecolumnlight 196.63 & \shadecolumnlight 143.30 & \shadecolumnlight 121.67
& 5.738 & 5.697 & 5.669 \\ 
\bottomrule
\end{tabular}
\vspace{1em}
\caption{Generative perplexity, entropy, and MAUVE evaluated for a masked diffusion language model (MDLM) trained on OpenWebText over 5000 sequences of length 1024.}
\vspace{-3em}
\label{tab:MDLM}
\end{table}

\noindent {\bf Language datasets.} We evaluate pre-trained models from UDLM~\citep{UDLM} and MDLM~\citep{ReMDM} on LM1B and OpenWebText, which use uniform and masking corruption processes, respectively. For MDLM, we examine the stochasticity tradeoff from adding stochasticity via ReMDM, which adjusts stochasticity through closed-form transition distributions (see Appendix~\ref{section:remdm}). We report generative perplexity for the uniform diffusion model over 1024 sequences of length 128, and for the masked diffusion model over 5000 sequences of length 1024. 

In contrast to image models, we find that reducing stochasticity has a limited impact on final sample quality. In particular, our results for UDLM (see Appendix~\ref{section:lang}, Table~\ref{tab:UDLM}) show that deterministic and stochastic samplers (e.g., DDIM vs. D3PM) achieve comparable performance. This can be explained by the fact that these samplers avoid constant-rate approximations, further suggesting that parallel decoding errors serve as the major performance bottleneck. As a result, increasing stochasticity alone does not improve convergence in this setting. Broadly, these results suggest that improving sample quality in text diffusion models may require mechanisms that explicitly introduce transitions across dimensions to better correct for parallel decoding errors, rather than solely increasing stochasticity within each dimension.

We further observe that, while less stochastic samplers achieve lower error at low NFE, increased stochasticity does not consistently improve sample quality and can even degrade performance. In Table~\ref{tab:MDLM}, we show that the error-correcting effects reported in ReMDM disappear without nucleus sampling. We hypothesize that nucleus sampling sharpens the posterior $p_{0\mid t}$, biasing samples toward high-probability modes and altering the contraction behavior of the diffusion process. Consistent with our theory, this sharpening can reduce the effective error-correcting ability of stochastic transitions by pushing the system toward a worst-case contraction regime.

\vspace{-1em}
\begin{figure*}[htbp]
    \centering
    \includegraphics[width=\linewidth]{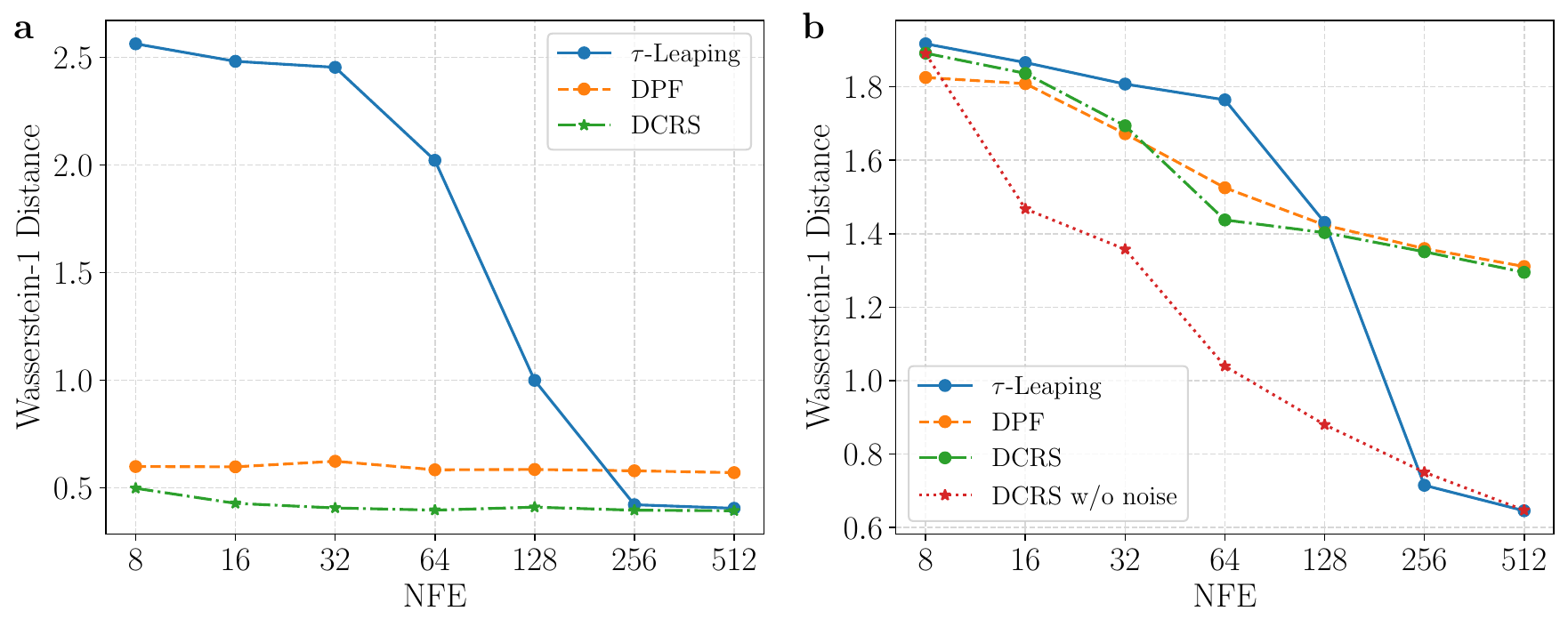}
    \vspace{-2em}
    \caption{
    {\bf Sample generation experiments on an 8D projection of a mixture of Gaussians dataset for models trained on \textbf{(a)} uniform and \textbf{(b)} masking corruption process.} Sample quality is evaluated using the Wasserstein-1 distance between 10K generated samples and the ground-truth distribution. DCRS achieves the best speed-quality tradeoff through error correction.
    }
    \label{fig:projected}
    \vspace{-1em}
\end{figure*}
\noindent {\bf Mixture of Gaussians.} We study a controlled mixture-of-Gaussians setting to directly visualize the effect of stochasticity on mode coverage and error correction. We train MLP-based diffusion models on a 2D dataset with 8 modes and, following prior work~\citep{DFOT,SDDM}, quantize the data into bit strings. We add high-dimensional structure by projecting the data into 8 dimensions using a random Gaussian projection, yielding 128-bit sequences (see Appendix~\ref{section:details}). To accentuate model errors, we train models with a linear noise schedule and perform inference with a geometric schedule, which increases step sizes near the data distribution and amplifies error accumulation. 

In Figure~\hyperref[fig:projected]{3a}, we observe a clear tradeoff between convergence speed and sample quality for the uniform diffusion model. The $\tau$-leaping sampler converges slowly but produces well-distributed samples across modes, whereas the near-deterministic DPF sampler immediately generates samples near the ground-truth distribution (within $\sim$8 NFE) but fails to improve with NFE, reflecting sensitivity to accumulated errors. In contrast, DCRS restores sample quality by reintroducing stochasticity. 

We evaluate sample quality using the Wasserstein-1 distance between 10K generated samples and the ground-truth distribution. Across both uniform and masking corruption processes, DCRS achieves improved speed-quality tradeoffs by balancing fast convergence with error correction. For DCRS, we use restart windows of $[t_{\text{min}}, t_{\text{max}}] = [0.01, 0.05]$ (uniform) and $[0.1, 0.2]$ (masking), with the number of discretization steps inside the restart window matched to the main trajectory. In this controlled setting, we find that restarting alone provides most of the benefit, while additional noise churning or higher-order solvers can introduce unnecessary discretization error.

\noindent {\bf Impact of temperature sampling.} For the masked diffusion experiments in Figure~\hyperref[fig:projected]{3b}, we sample from the sharpened score $(\frac{p_t(y)}{p_t(x)})^{1/\tau}$ for $\tau = 0.8$. The overall effect is equivalent to scaling marginal logits $\tau^{-1}\log p_{t}(y)$, analogous to temperature sampling. We observe that stochasticity alone does not yield error-correcting benefits in this setting; instead, these effects emerge only in conjunction with temperature scaling. This can be explained by the structure of masking diffusion: without additional stochasticity, sequences remain highly masked until late in the trajectory ($t \approx 0.1$), limiting opportunities for mixing and error contraction. Temperature scaling sharpens the posterior, further concentrating probability mass and reducing the effectiveness of stochastic transitions in correcting errors. To counteract this, we introduce targeted stochasticity using a schedule $\nu_t = 20 \cdot \mathbf{1}_{t < 0.1}$, which improves sample quality at high NFE by encouraging mixing earlier in the trajectory. However, we find that restarting alone, without additional noise, often yields stronger improvements than combining restarting with stochastic perturbations. This suggests that resetting the trajectory to earlier time steps allows the reverse CTMC to revisit higher-entropy regions, where contraction is stronger and mixing between modes is more effective.

\vspace{-1.5em}
\section{Related Work}
\vspace{-1em}
\paragraph{Accelerating Discrete Diffusion Inference.} Prior work has achieved strong performance in the low-NFE regime of discrete diffusion models through \emph{distillation methods} \citep{Duality,Di4C,SDTT}, which require retraining models using pre-trained samplers as targets. In contrast, our work focuses on improving the efficiency of \textit{training-free} sampling methods. A complementary line of work develops higher-order solvers that approximate transition dynamics using Taylor expansions, including Runge-Kutta and trapezoidal methods \citep{RKT}. These approaches trade additional neural network evaluations for reduced discretization error. Corrector-based methods \citep{TLDR,Informed} further improve sampling by introducing Markov Chain Monte Carlo updates that preserve the time marginals. Unlike these approaches, which primarily address discretization error, our method targets the accumulation of \emph{model approximation errors} through controlled stochasticity. Notably, higher-order solvers and corrector iterations are complementary to DCRS and can be combined to further improve performance.

\vspace{-1.25em}

\paragraph{Understanding Error Correction in Diffusion Models.} In continuous diffusion models, stochasticity has been observed to improve sample quality by mitigating error accumulation, for example, through noise injection during sampling \citep{EDM,Restart}. Theoretical work \citep{Restart} shows that stochasticity can induce contraction in the Wasserstein-1 distance. In discrete settings, prior work \citep{UDLM,ReMDM,GIDD} has noted that uniform diffusion processes can iteratively correct errors, while masking-based processes involve irreversible transitions. Similar observations have been made in discrete flow-matching, where redundant transitions can improve performance. However, these effects are largely understood heuristically. In contrast, we provide a principled characterization of error correction in discrete diffusion through KL contraction analysis and show how stochasticity can be explicitly controlled to mitigate the accumulation of \emph{model approximation errors} during inference. We demonstrate that the effectiveness of stochasticity depends on the structure of the corruption process and validate these findings empirically across controlled and large-scale settings.

\vspace{-1.25em}
\section{Conclusion}
\vspace{-1em}
We investigated the role of stochasticity in discrete diffusion models and showed that it induces an error-correcting mechanism that mitigates the accumulation of model approximation errors during sampling. Building on this insight, we proposed Discrete Churn and Restart Sampling (DCRS), a training-free algorithm that balances fast deterministic updates with strategically injected stochasticity to improve the speed-quality tradeoff, particularly in the low-NFE regime. Our theoretical and empirical results highlight that the effectiveness of stochasticity depends on both its magnitude and the structure of the corruption process, suggesting that controlling stochasticity is a fundamental design principle for discrete diffusion inference. A limitation of our approach is the need to tune the level and schedule of stochasticity. An interesting direction for future work is to develop adaptive methods for optimizing this tradeoff, as well as to further characterize optimal sampling rates through information-theoretic analysis.

\bibliographystyle{unsrtnat}
\bibliography{./neurips_conference}

\appendix
\section{CTMC Background}
\label{section:background}
\subsection{CTMC Simulation Details}
Given $R_t$, one can approximate the forward process of the CTMC by discretizing time with step size $\Delta t$, yielding the infinitesimal transition distribution:
\begin{equation}
\label{eq:inf_transition}
    q_{t\mid t-\Delta t}(y \mid x) = \delta_{x,y} + R_t(y,x)\Delta t + O(\Delta t^2),
\end{equation}
where $\delta_{x,y}$ denotes the indicator function over $x = y$. For computational convenience, the rate matrix is often restricted to the form $R_t = \beta_t R_b$, where $R_b \in \mathbb{R}^{S^D\times S^D}$ is a fixed base rate matrix and $\beta_t$ defines the noise schedule. Under this assumption, the forward transition distribution admits the following closed-form expression:
\begin{equation}
\label{eq:forward}
    q_{t\mid s}(y \mid x) = \Bigg[\exp\!\Big(\int_s^t R_\tau \, d\tau\Big)\Bigg](y,x).
\end{equation}

Because the state space $\gS^D$ grows exponentially with dimension, the full rate matrix is often intractable in practice. Consequently, the reverse rate matrix is typically structured to factor across dimensions:
\begin{equation}
    \hat{R}_t(y,x) 
= \sum_{d=1}^{D} 
\frac{p_t(y^d \mid x^{\setminus d})}{p_t(x^d \mid x^{\setminus d})} 
R_t^{d}(y^d, x^d),
\label{eq:factorization}
\end{equation}
where $x^{\setminus d}$ denotes all coordinates except $d$, and $R_t^{d} \in S \times S$ is a single-site rate matrix acting on dimension $d$. Under this factorization, the reverse process is a composition of conditionally independent processes governed by $R_t^{d}$. Sampling can therefore be performed independently for each dimension, incurring at most second-order discretization error:
\begin{equation}
    q_{t\mid t-\Delta t}(y^{1:D}\mid x^{1:D}) = \prod_{d=1}^{D}q_{t\mid t-\Delta t}(y^d\mid x^{1:D}) + O(\Delta t^2).
\label{eq:parallel}
\end{equation}

Herein, we divide the main algorithms for sampling into discretization samplers, such as $\tau$-leaping and Euler steps that assume a constant rate matrix over each sampling step, and analytic samplers, such as Tweedie $\tau$-leaping and the D3PM posterior.
\subsubsection{Discretization Samplers}
In principle, samples can be generated via discretizing the infinitesimal transition distribution defined in Equation (\ref{eq:inf_transition}), using a first-order linearization that assumes a constant rate matrix over some interval of length $\Delta t$.

When the state space lacks ordering, such as in language modeling, the naive approach of simulating the CTMC through sampling from the categorical distribution defined by $\delta_{x_t,y} + \Delta t R_t(x_t,y)$ corresponds to the current state $x_t$. If step sizes $h$ are not sufficiently small, the computed distribution may not be a valid distribution, and care must be taken to adaptively select interval sizes to avoid this~\citep{DFM}.

When the state space is well-ordered, such as the pixel values of an image, $\tau$-leaping can improve upon the efficiency of Euler steps by simulating transitions via independent Poisson arrivals across dimensions. Specifically, samples are updated according to:
\begin{equation*}
    x_{t-\Delta t} = x_t + \sum_{y \in B_1(x_t)} P_{t,y} (y - x_t),
\end{equation*}
where $B_1(x_t)$ denotes the Hamming-1 neighborhood of $x_t$ and
$P_{t,y} \sim \text{Poisson}(\Delta t\, \hat{R}_t(y,x_t))$. Crucially, $\tau$-leaping sums multiple jumps together, allowing for more efficient traversal of the state space.

\subsubsection{Analytic Samplers}
D3PM uses Bayes rule to derive a marginally consistent time-reversal of the discrete time forward process, given by
\begin{align}
    q(x_s\mid x_t,x_0) &= \frac{q(x_t\mid x_s)q(x_s\mid x_0)}{q(x_t\mid x_0)}
    \label{eq:bayes}
\end{align}

Using an estimate of the posterior $p_{0\mid t}$, $x_0$ can be marginalized out to sample from the closed-form transition distribution. When only evaluation of the discrete score function is possible,~\citep{SEDD} proposed Tweedie $\tau$-leaping, which uses the score function and the known forward transition distribution to reconstruct the posterior in Equation (\ref{eq:bayes}).

Although the closed-form transition distribution does not suffer from discretization errors incurred from assuming a constant rate matrix, the samplers still incur $O(\Delta t^2)$ errors from sampling multiple dimensions independently in parallel (Equation (\ref{eq:factorization},\ref{eq:parallel})).

\section{Theoretical Results}
\label{section:theory}
\subsection{Discrete Probability Flow}
For clarity, we present a proof that the DPF rate matrix indeed preserves the time marginals in Theorem~\ref{thm:dpf} below:
\begin{theorem}[DPF Preserves Time Marginals]
    \label{thm:dpf}
    The DPF rate matrix given by Equation (\ref{eq:DPF}) preserves time marginals $p_t$ generated by the forward rate matrix $R_t$.   
\end{theorem}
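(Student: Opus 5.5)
The plan is to verify that the reverse-time Kolmogorov equation driven by $\hat{R}_{t,\mathrm{DPF}}$ is satisfied by the forward marginals $p_t$. Since this equation is a linear ODE with bounded coefficients on a finite state space, uniqueness then gives that a reverse process started at $p_1$ has marginal $p_t$ for every $t$.

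\emph{Step 1: rewrite the forward equation as net fluxes.} I would first fix the convention consistent with $\hat{R}_t(x,y)=\frac{p_t(y)}{p_t(x)}R_t(y,x)$. With the diagonal set to minus the off-diagonal row sum, $\frac{dp_t}{dt}=R_t^Tp_t$ becomes
\[
\frac{d p_t(x)}{dt}=\sum_{y\neq x}\bigl(p_t(y)R_t(y,x)-p_t(x)R_t(x,y)\bigr)=\sum_{y\neq x}J_t(y,x),
\]
where $J_t(y,x):=p_t(y)R_t(y,x)-p_t(x)R_t(x,y)$ is the antisymmetric net forward flux from $y$ to $x$.

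\emph{Step 2: the reverse requirement.} A process run backward in time with rate $\hat{R}$ preserves $p_t$ exactly when
\[
-\frac{dp_t(x)}{dt}=\sum_{y\neq x}\bigl(p_t(y)\hat{R}(y,x)-p_t(x)\hat{R}(x,y)\bigr).
\]
So it suffices that the net reverse flux $p_t(y)\hat{R}(y,x)-p_t(x)\hat{R}(x,y)$ equals $-J_t(y,x)$ for every pair. For the standard reversal this holds pairwise, because $p_t(x)\hat{R}_t(x,y)=p_t(y)R_t(y,x)$.

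\emph{Step 3: apply this to DPF.} Multiplying (\ref{eq:DPF}) by $p_t(x)$, after matching index conventions, gives
\[
p_t(x)\hat{R}_{t,\mathrm{DPF}}(x,y)=\bigl(a_{xy}-a_{yx}\bigr)_+ ,
\]
where $a_{xy}$ denotes the stochastic reverse flux $p_t(x)\hat{R}_t(x,y)$. The elementary identity $(u)_+-(-u)_+=u$ then shows that the net DPF flux between $x$ and $y$ equals $a_{xy}-a_{yx}$, which is the net flux of the exact reversal. By Step 2 this net flux is $J_t(x,y)=-J_t(y,x)$, which is the required value. Summing over $y\neq x$ gives the reverse Kolmogorov equation. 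Nonnegativity of the off-diagonal entries is automatic from $(\cdot)_+$, and setting the diagonal to minus the row sum makes $\hat{R}_{t,\mathrm{DPF}}$ a valid rate matrix.

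The main obstacle is bookkeeping rather than mathematics. The paper's row/column convention for $R_t(x,y)$ must be read so that (\ref{eq:DPF}) is exactly $\frac{1}{p_t(x)}\bigl(\text{flux}_{x\to y}-\text{flux}_{y\to x}\bigr)_+$ of the stochastic reversal, and I would state this convention explicitly at the start of the proof. I would also note that $p_t(x)>0$ is needed for the ratios to be defined. This holds for $t>0$ under the uniform and masking processes; states with $p_t(x)=0$ contribute no flux and can be treated separately.
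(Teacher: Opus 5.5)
Your proposal is correct and is the paper's proof in substance: both rest on the pairwise identity $(u)_+-(-u)_+=u$, i.e.\ DPF keeps the net flux between every pair of states and removes only the symmetric exchange. The paper applies this identity to the forward Kolmogorov equation for the detailed-balance partner $R_{t,\mathrm{DPF}}(x,y)=\frac{p_t(y)}{p_t(x)}\hat{R}_{t,\mathrm{DPF}}(y,x)$, implicitly invoking the time-reversal fact, whereas you verify the reverse equation directly and add an explicit uniqueness step, which you should state on $[\delta,1]$ because the score ratios can blow up as $t\to 0$; your caution about indices is also justified, since the appendix in effect uses Eq.~(\ref{eq:DPF}) with $x$ and $y$ swapped.
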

\begin{proof}
The time marginal distributions $p_t$ are generated from $R_t$ if and only if they satisfy the Kolmogorov Forward Equation
\begin{align}
    \left[\frac{dp_t}{dt}\right]_x &= [R_t^Tp_t]_x = \sum_{y\neq x}p_t(y)R_t(y,x) - p_t(x)R_t(x,y) 
    \label{eq:KFE}
\end{align}  

Under detailed balance, we know that the forward rate matrix corresponding to the DPF matrix satisfies:
\begin{align}
    R_{t,DPF}(x,y) &= \frac{p_t(y)}{p_t(x)}\hat{R}_{DPF}(y,x) = \frac{(p_t(x)R_t(x,y)-p_t(y)R_t(y,x))_+}{p_t(x)}
\end{align}

We then plug the DPF rate matrix into the Equation \ref{eq:KFE}:
\begin{align*}
    \left[\frac{dp_t}{dt}\right]_x 
    &= \sum_{y\neq x}p_t(y)\frac{(p_t(y)R_t(y,x)-p_t(x)R_t(x,y))_+}{p_t(y)} - p_t(x)\frac{(p_t(x)R_t(x,y)-p_t(y)R_t(y,x))_+}{p_t(x)} \\
    &= \sum_{y\neq x}p_t(y)R_t(y,x) - p_t(x)R_t(x,y)
\end{align*}
\end{proof}
Although written in a different form, the DPF rate matrix coincides with the minimal stochasticity probability flows defined in~\citep{MMF,DFM,KODFM} for uniform and masking formulations. These works focus on more general settings where the time marginals may not necessarily be generated by a Markov chain.

When $R_t$ is symmetric, the rate matrices become
\begin{align}
    R_{DPF}(x,y) &= \left(1-\frac{p_t(y)}{p_t(x)}\right)_+R_t(x,y) \\
    \hat{R}_{DPF}(x,y) &= \left(\frac{p_t(x)}{p_t(y)} - 1\right)_+R_t(x,y)
\end{align}
which can be seen as a thresholding of the discrete score function such that transitions are suppressed when $p_t(y) \leq p_t(x)$. The result is a flow toward states with higher probability mass.

Defining the probability mass per unit time flowing from state $x$ to $y$ as $v_t(x,y) = p_t(x)R_t(x,y)$, we find that $\min\{v_t(x,y),v_t(y,x)\}$ equally flows into and out of state $x$ and $y$. The DPF rate matrix can be understood as removing redundant mutual positive flow, which only serves to swap probability mass equally between state pairs. The effective new flow is set to be the remainder $v_t(x,y) - \min\{v_t(x,y),v_t(y,x)\} = (v_t(x,y) - v_t(y,x))_+$.

In particular, masking processes do not have mutual flow between state pairs, whereas uniform processes do. In the event that mutual flow is zero, additional mutual flow can be added via the stochasticity schedule $\nu_t$ in Equation~\ref{eq:corrector}.

\subsection{DDIM / ReMDM}
\label{section:remdm}
In this section, we adopt the notation of $\vx_t\in\R^S$ to represent a probability vector representing the time marginal distribution $p_t$.

Under DDIM, the sampling update is
\begin{align}
    \label{eq:noisyDDIM}
    \vx_s &= \sigma_t\vx_t + (\alpha_s-\sigma_t\alpha_t)\vx_0 + ((1-\alpha_s)-(1-\alpha_t)\sigma_t)\vpi
\end{align}
where $\sigma_t = \frac{1-\alpha_s}{1-\alpha_t}$ eliminates the $\vpi$ term and minimizes stochasticity. DDIM preserves the time marginals generated by the relation $\vx_t = \alpha_t\vx_0 + (1-\alpha_t)\vpi$, which can be verified by plugging in for $\vx_t$ and checking that $\vx_s$ is marginally consistent.

In Theorem~\ref{thm:dpfddim}, we demonstrate that under these corruption processes, DPF and DDIM are equivalent:
\begin{theorem}[DPF and DDIM Equivalence]
    \label{thm:dpfddim}
    For corruption process $\vx_t = \alpha_t\vx_0 + (1-\alpha_t)\vpi$, let $\hat{R}_{t,DPF}$ be the associated DPF rate matrix. The reverse transition distribution $q_{s\mid t}$ induced by sampling from the DPF CTMC can be expressed in closed-form:
    \label{eq:DDIM}
    \begin{equation}
        \vx_s = \left(\frac{1-\alpha_s}{1-\alpha_t}\right)\vx_t + \left(1-\frac{1-\alpha_s}{1-\alpha_t}\right)\vx_0
    \end{equation}
    which exactly matches the DDIM update~\citep{DDIM} outlined for the uniform diffusion case.
\end{theorem}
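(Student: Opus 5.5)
We prove Theorem~\ref{thm:dpfddim} by computing the DPF rate matrix explicitly for the corruption process $\vx_t = \alpha_t\vx_0 + (1-\alpha_t)\vpi$, solving the reverse Kolmogorov equation conditionally on $x_0$, and then averaging over the posterior. Throughout, we work at the level of probability vectors.

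\textbf{Step 1: the conditional flow.} Fix $x_0$ and write $\vx_t = \alpha_t \ve_{x_0} + (1-\alpha_t)\vpi$. Differentiating gives the forward velocity
\[
\dot{\vx}_t = \dot\alpha_t(\ve_{x_0} - \vpi).
\]
Since $\vx_t - \vpi = \alpha_t(\ve_{x_0}-\vpi)$, we can also write this as $\dot{\vx}_t = \tfrac{\dot\alpha_t}{\alpha_t}(\vx_t-\vpi)$, or equivalently $\dot{\vx}_t = \tfrac{-\dot\alpha_t}{1-\alpha_t}(\vx_t - \ve_{x_0})$.

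The key observation is that the net mass flux from $x$ to $y$ in the DPF matrix is $(v_t(x,y)-v_t(y,x))_+$. For $R_t = \beta_t(\mathbf{1}\vpi^T - I)$ we have $v_t(x,y) = p_t(x)\beta_t\pi(y)$, so the net flux is $\beta_t\bigl(p_t(x)\pi(y)-p_t(y)\pi(x)\bigr)_+$.

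I would verify, for each of the uniform and masking cases, that this net flux only moves mass toward states that the reverse process approaches, namely $x_0$. Conditionally on $x_0$, the ratio $p_t(y)/\pi(y)$ is largest at $y=x_0$ and constant on the other states. Hence the reversed DPF flow sends mass only from the other states into $x_0$, at rate $\tfrac{-\dot\alpha_t}{1-\alpha_t}$ (with time reversed), and never back out.

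\textbf{Step 2: integrating the conditional reverse CTMC.} With this rate, the reverse process is a pure-jump process into $x_0$. Mass not yet at $x_0$ survives from $t$ down to $s$ with probability
\[
\exp\!\Big(-\int_s^t \frac{-\dot\alpha_\tau}{1-\alpha_\tau}\,d\tau\Big) = \frac{1-\alpha_s}{1-\alpha_t}.
\]
This yields the conditional update $\vx_s = \sigma\vx_t + (1-\sigma)\ve_{x_0}$ with $\sigma = \tfrac{1-\alpha_s}{1-\alpha_t}$. As a check, plugging in $\vx_t$ returns $\alpha_s\ve_{x_0}+(1-\alpha_s)\vpi$.

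\textbf{Step 3: passing to the unconditional statement.} To go from the conditional statement to the unconditional one, I would use the fact that the marginal DPF rate is the posterior-weighted combination of the conditional rates. This holds because the flux is linear in $p_t$ once signs are fixed. Averaging over $p_{0\mid t}$ then replaces $\ve_{x_0}$ by $\vx_0 := \mathbb{E}[\ve_{x_0}\mid x_t]$. Finally, I would compare with Eq.~\eqref{eq:noisyDDIM} at $\sigma_t = \tfrac{1-\alpha_s}{1-\alpha_t}$, where the $\vpi$ coefficient vanishes, giving exactly the DDIM update.

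\textbf{Main obstacle.} The hard part is Step 3. The positive part $(\cdot)_+$ is nonlinear, so the marginal DPF rate need not equal the average of the conditional rates when different $x_0$ push flux in opposite directions. A closed-form kernel valid at all $s<t$ requires the posterior $p_{0\mid t}$ to be frozen along the path, which is what DDIM implicitly assumes.

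I would first try to show directly that, for uniform and masking $\vpi$, the sign of $p_t(x)\pi(y)-p_t(y)\pi(x)$ equals the sign of $\mathbb{E}[\ve_{x_0}]_y - \mathbb{E}[\ve_{x_0}]_x$ on the relevant pairs. This would make the flux affine in $\vx_0$. If that fails, I would interpret the theorem as holding for the one-step reverse kernel with the predicted $\vx_0$ held fixed over $[s,t]$, as in the D3PM/DDIM sampler.
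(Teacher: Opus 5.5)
Your Steps 1--2 follow the paper's proof. The paper takes the conditional rate $\frac{\dot\alpha_t}{1-\alpha_t}(\mathbf{1}\ve_{x_0}^T-I)$ from Theorem 3 of \citep{DFM}, whereas you derive it from the net-flux form. Your sign $-\dot\alpha_t$ is the correct one: with $+\dot\alpha_t$ the exponential produces $1/\sigma$ in place of $\sigma$. The paper then exponentiates to $Q_{t\to s\mid x_0}=\sigma I+(1-\sigma)\mathbf{1}\ve_{x_0}^T$ and simply averages over $p_{0\mid t}$ without further comment. So your fallback reading is exactly what the paper proves, and the paper does not address the issue you raise.

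The real problem is Step 3 and your first-choice repair. Your justification (``the flux is linear in $p_t$ once signs are fixed'') is false for uniform diffusion, and the sign check cannot fix it.

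\begin{itemize}
\item With $\vpi=\frac1S\mathbf{1}$, the conditional reverse flux from $x$ to $y$ is $\frac{\beta_t\alpha_t}{S}\delta_{x_0,y}$. It points $x\to y$ when $x_0=y$ and $y\to x$ when $x_0=x$.
\item Averaging over $x_0$ gives gross flux $\frac{\beta_t\alpha_t}{S}p_0(y)$ one way and $\frac{\beta_t\alpha_t}{S}p_0(x)$ the other. Eq.~(\ref{eq:DPF}) keeps only the net part, $\frac{\beta_t\alpha_t}{S}(p_0(y)-p_0(x))_+$.
\item The net fluxes do agree, since $p_t(x)-p_t(y)=\alpha_t(p_0(x)-p_0(y))$, so your sign comparison would go through. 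But it is irrelevant: the averaged rate carries an extra mutual flux $\frac{\beta_t\alpha_t}{S}\min\{p_0(x),p_0(y)\}$ that DPF deletes. This is $\mathbb{E}[(Z)_+]\neq(\mathbb{E}Z)_+$ for a $Z$ taking both signs.
\item The discrepancy is already present in the generator at $s=t$, so it has nothing to do with freezing the posterior.
\end{itemize}

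Concrete counterexample: take $S=2$ and $p_0=(\frac12,\frac12)$. Then $p_t$ is uniform for all $t$, the DPF rate vanishes identically, and its kernel is the identity. DDIM instead sends $x_t=1$ to $2$ with probability $(1-\sigma)\frac{1-\alpha_t}{2}>0$. Read literally as a statement about the CTMC of Eq.~(\ref{eq:DPF}), the theorem is false for uniform diffusion, so your first attempt cannot succeed.

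Drop it and state the fallback precisely. The DDIM kernel is $\sum_{x_0}p_{0\mid t}(x_0\mid x_t)\,Q_{t\to s\mid x_0}(x_t,\cdot)$. This is the exact two-time law of the mixture of conditional DPF chains. Equivalently, it is the kernel of the rate $\frac{-\dot\alpha_\tau}{1-\alpha_\tau}(\mathbf{1}\vx_0^T-I)$ with $\vx_0=p_{0\mid t}(\cdot\mid x_t)$ frozen on $[s,t]$. Your Steps 1--2 prove this directly, and it preserves marginals.

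It coincides with the chain of Eq.~(\ref{eq:DPF}) for masking. There is no mutual flux there, and in one dimension the jump law out of the mask state is $p_0$ at every time, so even the time-ordered kernel agrees. For uniform diffusion it is strictly more stochastic than DPF, because of the redundant mutual flux.
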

\begin{proof}
    Following the result of Theorem 3 of~\citep{DFM}, we know that the conditional rate matrix is given by
    \begin{align*}
        \hat{R}_{t,DPF}(x,y\mid x_0) &= \frac{\dot{\alpha}_t}{1-\alpha_t}(\mathbf{1}\ve_{x_0}^T-I).
    \end{align*}

    The conditional transition matrix can be recovered via the matrix exponential
    \begin{align*}
        Q_{t\mid s,x_0} &= \exp\left(\left(\int_s^t\frac{\dot{\alpha}_t}{1-\alpha_t}\right)d\tau(\mathbf{1}\ve_{x_0}^T-I)\right) \\
        &= \left(\frac{1-\alpha_s}{1-\alpha_t}\right)I + \left(1- \frac{1-\alpha_s}{1-\alpha_t}\right)\mathbf{1}\ve_{x_0}^T
    \end{align*}

    Averaging over $p_{0\mid t}$, we find that the marginal transition matrix is
    \begin{align*}
        Q_{t\mid s} &= \left(\frac{1-\alpha_s}{1-\alpha_t}\right)I + \left(1- \frac{1-\alpha_s}{1-\alpha_t}\right)\mathbf{1}\vx_0^T
    \end{align*}

    The sampling update is then given by
    \begin{align*}
        \vx_s &= Q_{t\mid s}^T\vx_t = \left(\frac{1-\alpha_s}{1-\alpha_t}\right)\vx_t + \left(1-\frac{1-\alpha_s}{1-\alpha_t}\right)\vx_0
    \end{align*}
\end{proof}
The ReMDM sampler matches the DDIM update in Equation (\ref{eq:noisyDDIM}), under the reparameterization $\sigma_t = \frac{1 - \alpha_s - \sigma_t^{ReMDM}}{1-\alpha_t}$. Under this choice, the update becomes
\begin{align*}
    \vx_s &= \left(\frac{1 - \alpha_s - \sigma_t^{ReMDM}}{1-\alpha_t}\right)\vx_t + \left(\frac{\alpha_s - \alpha_t - \sigma_t^{ReMDM}\alpha_t}{1-\alpha_t}\right)\vx_0 + \sigma_t^{ReMDM}\vpi,
\end{align*}
where $\sigma_t^{ReMDM}$ controls the probability of sampling from the stationary distribution, or the masked state.

Importantly, we also demonstrate that under added stochasticity $\sigma_t$, DDIM / ReMDM decomposes into a DDIM / DPF step, followed by a corrector step that preserves time marginals. According to Theorem 4.1 of~\citep{ReMDM}, every update is a composition of Equation (\ref{eq:DDIM}) and the following corrector step that preserves the marginal at time $s$:
\begin{align}
    \label{eq:pc}
    \vx_{s_c} &= \left(1-\frac{\sigma_t^{ReMDM}}{1-\alpha_s}\right)\vx_s + \frac{\sigma_t^{ReMDM}\alpha_s}{1-\alpha_s}\vx_0 + \sigma_t^{ReMDM}\vpi.
\end{align}
In general, we can also decompose the corrector step according to
\begin{align}
    \vx_{s_b} &= \left(1-\frac{\sigma_t^{ReMDM}\alpha_s}{(1-\alpha_s)(1-\sigma_t^{ReMDM})}\right)\vx_s + \frac{\sigma_t^{ReMDM}\alpha_s}{(1-\alpha_s)(1-\sigma_t^{ReMDM})}\vx_0 \label{eq:bwdpc} \\
    \vx_{s_c} &= (1-\sigma_t^{ReMDM})\vx_{s_b} + \sigma_t^{ReMDM}\vpi \label{eq:fwdpc}
\end{align}

\subsection{Proofs of Main Theorems}
\begin{proof}[Proof of Theorem~\ref{thm:contraction}]
    We use the following characterization of the SDPI constant for the KL divergence in terms of mutual information:
    \begin{align*}
        \eta_{KL}(p_s,q_{t\mid s}) &= \sup_{P_{U\mid X_s}:U\to X_s\to X_t}\frac{I(U;X_t)}{I(U;X_s)}
    \end{align*}
    where we denote $X_t \sim p_t$.
    
    % Masking
    For masking diffusion, we simply use proof for the binary state space $S = \{0,1,M\}$ outlined in~\citep{PolyanskiyWu} and apply it to the $q$-ary case. Suppose we define an auxiliary random variable $B = \mathbf{1}_{X_t=M}$, where since $B$ is a deterministic function of $X_t$:
    \begin{align*}
        I(U;X_t) &= I(U;X_t,B) = I(U,B) + I(U;X_t\mid B) = I(U;X_t\mid B)
    \end{align*}
    where $I(U;B) = 0$ since $U\independent B$.
    We further find that
    \begin{align*}
        I(U;X_t\mid B) = \left(1-\frac{\alpha_t}{\alpha_s}\right)I(U;X_t\mid B=1) + \frac{\alpha_t}{\alpha_s}I(U;X_t\mid B=0) = \frac{\alpha_t}{\alpha_s}I(U;X_s)
    \end{align*}
    $I(U;X_t\mid B=1) = 0$ since $X_t = M$ deterministically, and $I(U;X_t\mid B=0) = I(U;X_s)$ since not masking leaves the random variable unchanged. Since this holds for all such $P_{U\mid X_s}$, we have $\eta_{KL}(p_s,q_{t\mid s}^M) = \frac{\alpha_t}{\alpha_s}$, where the contraction inequality always holds with equality.
    
    % Uniform
    For uniform diffusion, deriving the SDPI constant is generally difficult, and only upper bounds are known for the general case. For $S = 2$, an upper bound is $\eta_{KL}(q_{t\mid s}) = \sup_{p_s}\eta_{KL}(p_s,q_{t\mid s}) = (\frac{\alpha_t}{\alpha_s})^2$, the proof of which we defer to~\cite{PolyanskiyWu}. An upper bound for the general case is outlined in Proposition 38 of~\citep{Potts}, given by $\eta_{KL}(p_s,q_{t\mid s}) \leq \eta_{KL}(q_{t\mid s}) \leq \frac{S(\frac{\alpha_t}{\alpha_s})^2}{(S-2)\frac{\alpha_t}{\alpha_s}+2}$. For large $S$, the upper bound approaches the worst-case contraction $\frac{\alpha_t}{\alpha_s}$. Intuitively, under a larger number of states $S$, there is more room for distributions to spread out without overlapping, whereas for small $S$, mixing tends to create more overlap in distribution. 
    
    % Proof of sharpness
    Following~\citep{PolyanskiyWu}, we have the bounds $\eta_{KL}(p_s,q_{t\mid s})\leq\eta_{TV}(q_{t\mid s})$. In particular, suppose we let $\vx_t$ and $\hat{\vx}_t$ be the probability vectors representing the ground-truth and perturbed distributions, respectively. The forward process can be expressed as the update $\vx_t = \frac{\alpha_t}{\alpha_s}\vx_s + (1-\frac{\alpha_t}{\alpha_s})\vpi$.

    We can then analyze the contraction in TV distance directly:
    \begin{align*}
        TV(\hat{\vx}_t,\vx_t) &= \frac{1}{2}\Vert\hat{\vx}_t - \vx_t\Vert_1 = \frac{1}{2}\frac{\alpha_t}{\alpha_s}\Vert\hat{\vx}_s - \vx_s\Vert_1 = \frac{\alpha_t}{\alpha_s}TV(\hat{\vx}_s,\vx_s)
    \end{align*}
    
    Since the above holds for arbitrary input distributions, this implies that $\eta_{TV}(q_{t\mid s}) = \frac{\alpha_t}{\alpha_s}$ and that TV distance always contracts by exactly $\frac{\alpha_t}{\alpha_s}$. Putting the above together, we conclude that $\eta_{KL}(p_s,q_{t\mid s}^M)$ indeed achieves the upper bound.
\end{proof}

\begin{proof}[Proof of Theorem~\ref{thm:errorcorrection}]
% DDIM
We begin by analyzing the contraction in Equation (\ref{eq:DDIM}), given by $\vx_s = \sigma_{s\mid t}\vx_t + (1-\sigma_{s\mid t})\vx_0(\vx_t)$. To further simplify analysis, we can express the posterior as
\begin{align*}
    p_{0\mid t}(x_0\mid x_t) &= \frac{p_{t\mid 0}(x_t\mid x_0)p_0(x_0)}{p_t(x_t)} \\
    &= \frac{(\alpha_t\delta_{x_0}(x_t) + (1-\alpha_t)\pi(x_t))p_0(x_0)}{\alpha_tp_0(x_t)+(1-\alpha_t)\pi(x_t)} \\
    &= \left(\frac{\alpha_tp_0(x_t)}{\alpha_tp_0(x_t)+(1-\alpha_t)\pi(x_t)}\right)\delta_{x_t}(x_0) + \left(\frac{(1-\alpha_t)\pi(x_t)}{\alpha_tp_0(x_t)+(1-\alpha_t)\pi(x_t)}\right)p_0(x_0)
\end{align*}

In vector notation, we have $\vx_0(\vx_t) = \vf_t\odot\vx_t+\langle 1-\vf_t, \vx_t\rangle\vx_0$, where $(\vf_t)_i := \frac{\alpha_t(\vx_0)_i}{\alpha_t(\vx_0)_i+(1-\alpha_t)\vpi_i}$.

Using joint convexity, the KL divergence contraction
\begin{align*}
    D_{KL}(\hat{\vx}_s, \vx_s) &= D_{KL}(\sigma_{s\mid t}\hat{\vx}_t + (1-\sigma_{s\mid t})\vx_0(\hat{\vx}_t) \Vert \sigma_{s\mid t}\vx_t + (1-\sigma_{s\mid t})\vx_0(\vx_t)) \\
    &\leq \sigma_{s\mid t}D_{KL}(\hat{\vx}_t\Vert \vx_t) + (1-\sigma_{s\mid t})D_{KL}(\vx_0(\hat{\vx}_t) \Vert \vx_0(\vx_t)) \\
    &\leq (\sigma_{s\mid t} + (1-\sigma_{s\mid t})\eta_{KL}(p_t,p_{0\mid t}))D_{KL}(\hat{\vx}_t\Vert \vx_t)
\end{align*}

We can find an upper bound on $\eta_{KL}(p_t,p_{0\mid t})$ by analyzing the contraction in TV distance. To do so, we use Dobrushin's characterization:
\begin{align*}
    \eta_{TV}(p_{0\mid t}) &= \sup_{i\neq j}TV(\vx_0(\ve_i),\vx_0(\ve_j)) \\
    &= \sup_{i\neq j}TV(\vf_t\odot\ve_i + \langle 1-\vf_t,\ve_i\rangle\vx_0,\vf_t\odot\ve_j + \langle 1-\vf_t,\ve_j\rangle\vx_0) \\
    &= \sup_{i\neq j}\frac{1}{2}\Vert((\vf_t)_i\ve_i + (1-(\vf_t)_i)\vx_0)-((\vf_t)_j\ve_j + (1-(\vf_t)_j)\vx_0)\Vert_1 \\
    &= \sup_{i\neq j}\frac{1}{2}\Vert((\vf_t)_i\ve_i + (\Vert\vf_t\Vert_{\infty}-(\vf_t)_i)\vx_0)-((\vf_t)_j\ve_j + (\Vert\vf_t\Vert_{\infty}-(\vf_t)_j)\vx_0)\Vert_1 \\
    &\leq \sup_{i\neq j}\frac{1}{2}\Vert(\vf_t)_i\ve_i + (\Vert\vf_t\Vert_{\infty}-(\vf_t)_i)\vx_0\Vert_1+\frac{1}{2}\Vert(\vf_t)_j\ve_j + (\Vert\vf_t\Vert_{\infty}-(\vf_t)_j)\vx_0\Vert_1 \\
    &= \sup_{i\neq j}\frac{(\vf_t)_i+\Vert\vf_t\Vert_{\infty}-(\vf_t)_i}{2} + \frac{(\vf_t)_j+\Vert\vf_t\Vert_{\infty}-(\vf_t)_j}{2} \\
    &= \Vert \vf_t\Vert_{\infty}
\end{align*}
From the above we have that $\eta_{KL}(p_t,p_{0\mid t})\leq\eta_{TV}(p_{0\mid t}) \leq \Vert\vf_t\Vert_{\infty}$, where
\begin{align*}
    \Vert\vf_t\Vert_{\infty} = \max_{x_t\in S}\frac{\alpha_tp_0(x_t)}{\alpha_tp_0(x_t)+(1-\alpha_t)\pi(x_t)}
\end{align*}

% Sharpness
In the case of uniform diffusion we know that $\pi(x_t) > 0$ for all states $x_t \in S$, so $\Vert\vf_t\Vert_{\infty}<1$ and $\eta_{KL}(p_s,q_{t\mid s}) < 1$ for all times $t \in (0,1)$. In particular, we find that contraction is tightest when $\alpha_t \approx 0$ near the noise distribution and weakest when $\alpha_t \approx 1$ near the data distribution.

For masking diffusion, $\pi(x_t) = 0$ if $x_t$ is not a masking state, so $\Vert\vf_t\Vert_{\infty}=1$. Since $\eta_{KL}(p_t,p_{0\mid t}) \leq 1$, this suggests that masking diffusion fails to contract in the worst case. 

We show that this is not pessimistic, at least in TV distance, and that perturbing mass on non-masking states while preserving the mass on the masking state will result in a failure to contract. Specifically, we consider the perturbation $\hat{\vx}_t$ of $\vx_t$ such that states $j,k \in \mathcal{S}$ are swapped:
\begin{align*}
    (\hat{\vx}_t)_i &= 
    \begin{cases}
        (\vx_t)_j &\text{if } i = k \\
        (\vx_t)_k &\text{if } i = j \\
        (\vx_t)_i &\text{otherwise}
    \end{cases}
\end{align*}
The posterior distribution $p_{0\mid t}$ is the identity channel when $x_t$ is non-masking, and so $TV(\vx_0(\hat{\vx}_t),\vx_0(\vx_t)) = TV(\hat{\vx}_t,\vx_t)$ and $\eta_{TV}(p_{0\mid t}) = 1$. Intuitively, masking fails to contract since non-masking tokens can never change. 

Leveraging the characterization in~\cite{SDPI}, we also have $\eta_{KL}(p_t,q_{t\mid s}^{\mathrm{DPF}}) = \eta_{KL}(p_t,p_{0\mid t}) = 1$ if and only if $p_{0\mid t}$ is not fully supported over all values in $\mathcal{S}$, which is true for masking diffusion. Hence the upper bound is indeed tight for masking diffusion.

In general, the upper bound $\eta_{KL}(p_s,q_{t\mid s})\leq \sigma_{s\mid t} + (1-\sigma_{s\mid t})\eta_{KL}(p_t,p_{0\mid t})$ is not always tight, but tightening the bound may require stronger assumptions such as independence $\vx_0(\vx_t) = \vx_0$, which is only true at the noise distribution $t = 1$. In this case, when $\vx_0 = \ve_i$ for some state $i\in\mathcal{S}$, the transition update is a masking process and the upper bound is attained at $\sigma_{s\mid t}$. When $\vx_0 = \frac{1}{S}\mathbf{1}$ the transition is a uniform process and the upper bound can be sharpened to $\frac{S(\sigma_{s\mid t})^2}{(S-2)\sigma_{s\mid t}+2}$.

% Stochasticity
Under the stochasticity schedule $\nu_t$, we can express the Euler step using a marginally consistent update:
\begin{equation}
    \label{eq:stochasticityupdate}
    \vx_s = \left(1+\frac{(\alpha_t-\alpha_s)(\nu_t-\alpha_t-2\alpha_t\nu_t)}{\alpha_t(1-\alpha_t)}\right)\vx_t+\frac{(1+\nu_t)(\alpha_t-\alpha_s)}{1-\alpha_t}\vx_0 + \frac{\nu_t(\alpha_s-\alpha_t)}{\alpha_t}\vpi
\end{equation}
which corresponds to $\sigma_t^{ReMDM} = \frac{\nu_t(\alpha_s-\alpha_t)}{\alpha_t}$~\citep{ReMDM}. While in principle $\nu_t$ can be driven arbitrarily large, the above update requires that the interval $[s,t]$ be chosen such that $\sigma_t^{ReMDM} \leq \min\{1,\frac{1-\alpha_s}{\alpha_t}\}$ to ensure that the update samples from a valid probability distribution.

We use the fact that the step can be decomposed into a DDIM update from time $t$ to $s$, followed by a corrector step that preserves the marginal at time $s$ in Equation (\ref{eq:pc}). The corrector step itself decomposes into a DDIM step and a forward step in Equations (\ref{eq:bwdpc},\ref{eq:fwdpc}). Let $q_{s\mid t}^{b}$ and $q_{t\mid s}^{f}$ represent each transition of the corrector step.

Using the fact that under compositions of Markov chains, SDPI constants are upper-bounded multiplicatively, and hence
\begin{align*}
    \eta_{KL}(p_t,q_{s\mid t}^{\nu_t}) &\leq \eta_{KL}(p_t,q_{s\mid t}^{\mathrm{DPF}})\eta_{KL}(q_{s\mid t}^{b})\eta_{KL}(q_{t\mid s}^{f}) 
\end{align*}
Using the same reasoning from before, we have as upper bounds
\begin{align*}
    \eta_{KL}(q_{s\mid t}^{b}) &\leq \sigma_{s\mid t,b} \\
    &:= 1-\left(\frac{\frac{\nu_t(\alpha_s-\alpha_t)}{\alpha_t}}{1-\frac{\nu_t(\alpha_s-\alpha_t)}{\alpha_t}}\right)\left(\frac{\alpha_s}{1-\alpha_s}\right) + \left(\frac{\frac{\nu_t(\alpha_s-\alpha_t)}{\alpha_t}}{1-\frac{\nu_t(\alpha_s-\alpha_t)}{\alpha_t}}\right)\left(\frac{\alpha_s}{1-\alpha_s}\right)\eta_{KL}(p_t,p_{0\mid t})  \\
    \eta_{KL}(q_{t\mid s}^{f}) &\leq \sigma_{s\mid t,f} \\
    &:= \left(1-\frac{\nu_t(\alpha_s-\alpha_t)}{\alpha_t}\right)\eta_{KL}(p_t,p_{0\mid t})
\end{align*}

In general, the contraction coefficient can be set to the minimum value of $\alpha_s\eta_{KL}(p_t,p_{0\mid t})$ by setting $\sigma_t^{ReMDM} = \frac{\nu_t(\alpha_s-\alpha_t)}{\alpha_t} = 1 - \alpha_s$, where $1-\alpha_s \leq \min\{1,\frac{1-\alpha_s}{\alpha_t}\}$ is always satisfied. This corresponds to a stochasticity schedule of $\nu_t = \frac{\alpha_t(1-\alpha_s)}{\alpha_s-\alpha_t}$. In light of Equation (\ref{eq:DDIM}), the update reduces to $\alpha_s\vx_0(\vx_t) + (1-\alpha_s)\vpi$. That is, samples rely entirely on the estimated posterior at time step $t$. However, this is not necessarily useful in practice due to the additional errors in the posterior estimate.
\end{proof}

\begin{proof}[Proof of Theorem~\ref{thm:errorbound}]
    We first focus on showing the DPF sampling bound, where the per-step marginal error bound follows from the triangle inequality:
    \begin{align*}
        TV(p_{t_{k+1}}^{\theta},p_{t_{k+1}}) &= TV(\vx_{t_{k+1}}^{\theta},\vx_{t_{k+1}}) \\
        &\leq \sigma_{t_{k+1}\mid t_k}TV(\vx_{t_k}^{\theta},\vx_{t_{k}}) + (1-\sigma_{t_{k+1}\mid t_k})TV(\vx_0^{\theta}(\vx_{t_k}^{\theta}),\vx_0(\vx_{t_k})) 
    \end{align*} 
    and
    \begin{align*}
        TV(\vx_0^{\theta}(\vx_{t_k}^{\theta}),\vx_0(\vx_{t_k})) &\leq TV(\vx_0^{\theta}(\vx_{t_k}^{\theta}),\vx_0(\vx_{t_k}^{\theta})) + TV(\vx_0(\vx_{t_k}^{\theta}),\vx_0(\vx_{t_k})) \\
        &\leq TV(p_{0\mid t_k}^{\theta},p_{0\mid t_k}) + \eta_{TV}(p_{0\mid t_k})TV(\vx_{t_k}^{\theta},\vx_{t_k}) \\
        &= \epsilon_{k} + \eta_{TV}(p_{0\mid t_k})TV(p_{t_k}^{\theta},p_{t_k})
    \end{align*}

    Putting the above together, we have
    \begin{align*}
        TV(p_{t_{k+1}}^{\theta},p_{t_{k+1}}) &\leq (\sigma_{t_{k+1}\mid t_k} + (1-\sigma_{t_{k+1}\mid t_k})\eta_{TV}(p_{0\mid t_k}))TV(p_{t_k}^{\theta},p_{t_k}) + (1-\sigma_{t_{k+1}\mid t_k})\epsilon_k 
    \end{align*}

    Under the stochasticity schedule $\nu_t$, we treat the step updates as a composition, which in the Theorem proof of~\ref{thm:errorcorrection} we showed to be true in the absence of discretization errors. We then apply the triangle inequality in succession to separate the model errors:
    \begin{align*}
        TV(p_{t_k,b}^{\theta},p_{t_k,b}) &\leq (\sigma_{t_{k+1}\mid t_k} + (1-\sigma_{t_{k+1}\mid t_k})\eta_{TV}(p_{0\mid t_k}))TV(p_{t_k}^{\theta},p_{t_k}) + (1-\sigma_{t_{k+1}\mid t_k})\epsilon_k \\
        &\leq A_{k,d}TV(p_{t_k}^{\theta},p_{t_k}) + B_{k,d}\epsilon_k \\
        TV(p_{t_k,f}^{\theta},p_{t_k,f}) &\leq (\sigma_{t_{k+1}\mid t_k,b} + (1-\sigma_{t_{k+1}\mid t_k,b})\eta_{TV}(p_{0\mid t_k}))TV(p_{t_k,b}^{\theta},p_{t_{k,b}}) + (1-\sigma_{t_{k+1}\mid t_k,b})\epsilon_k \\
        &\leq A_{k,b}TV(p_{t_k,b}^{\theta},p_{t_k,b}) + B_{k,b}\epsilon_k \\
        TV(p_{t_{k+1}}^{\theta},p_{t_{k+1}}) &\leq (1-\sigma_{t_{k+1}\mid t_k,f})\eta_{TV}(p_{0\mid t_k})  TV(p_{t_k,f}^{\theta},p_{t_k,f}) + (1-\sigma_{t_{k+1}\mid t_k,f})\epsilon_k \\
        &\leq A_{k,f}TV(p_{t_k}^{\theta},p_{t_k}) + B_{k,f}\epsilon_k
    \end{align*}
    where $\sigma_{t_{k+1}\mid t_k,b}$ and $\sigma_{t_{k+1}\mid t_k,f}$ are the contraction factors associated with the corrector steps, defined in the proof of Theorem~\ref{thm:errorcorrection}.

    The overall update reduces to
    \begin{align*}
        TV(p_{t_{k+1}}^{\theta},p_{t_{k+1}}) &\leq A_{k,d}A_{k,b}A_{k,f}TV(p_{t_k}^{\theta},p_{t_k}) + (A_{k,f}A_{k,b}B_{k,d} + A_{k,f}B_{k,b} + B_{k,f})\epsilon_k 
    \end{align*}

    If $\nu_t = \frac{\alpha_t(1-\alpha_s)}{\alpha_s-\alpha_t}$, the update each step is $\vx_{t_{k+1}} = \alpha_{t_{k+1}}\vx_0(\vx_{t_k}) + (1-\alpha_{t_{k+1}})\vpi$. The inequality then reduces to
    \begin{align*}
        TV(p_{t_{k+1}}^{\theta},p_{t_{k+1}}) &\leq \alpha_{t_{k+1}}\eta_{TV}(p_{0\mid t_k})TV(p_{t_k}^{\theta},p_{t_k}) + \alpha_{t_{k+1}}\epsilon_k 
    \end{align*}

    In particular, we know that $\alpha_{t_{k+1}} \geq (1-\sigma_{t_{k+1}\mid t_k})$. This suggests that the stochasticity schedule $\nu_t$ uses updates that rely on the posterior $\vx_0(\vx_{t_k})$ at least as much as that of the DPF updates, resulting in a larger $\epsilon_k$ coefficient. On the other hand, if the posterior is not contractive and $\eta_{TV}(p_{0\mid t_k})=1$, as is the case under masking diffusion, the stochasticity schedule $\nu_t$ can ensure that the contraction is no more than $\alpha_{t_{k+1}}$.

    In both cases, this reduces to
    \begin{align*}
        TV(p_{t_{k+1}}^{\theta}, p_{t_{k+1}}) \leq A_kTV(p_{t_k}^{\theta}, p_{t_k}) + B_k\epsilon_k.
    \end{align*}

    The total error bound over sampling $K$ steps proceeds via induction by successively applying upper bounds on the per-step marginal error, where 
    \begin{align*}
        TV(p^\theta_0,p_0) &\le \left(\prod_{k=1}^{K}A_k\right) TV(p_{t_1}^{\theta}, p_{t_1})+ \sum_{k=1}^K\left(\prod_{j=k+1}^K A_j\right)B_k \epsilon_k = \sum_{k=1}^K\left(\prod_{j=k+1}^K A_j\right)B_k \epsilon_k \\
    \end{align*}
    where $p_{t_1}^{\theta} = p_{t_1}$ since the noise distribution at $t = 1$ is always sampled exactly.

    Under one restart of DCRS without any churning steps, the sampling bound for DPF carries over, except the forward process contracts by a factor of $\frac{\alpha_{t_{\mathrm{max}}}}{\alpha_{t_{\mathrm{min}}}}$. Suppose that $t_{\mathrm{min}} = t_{k_{\mathrm{min}}}$ and $t_{\mathrm{max}} = t_{k_{\mathrm{max}}}$.
    \begin{align*}
        TV(p^\theta_0,p_0) &\le \frac{\alpha_{t_{\mathrm{max}}}}{\alpha_{t_{\mathrm{min}}}}\sum_{k=1}^{k_{\mathrm{min}}}\left(\prod_{j=k+1}^{k_{\mathrm{min}}} A_j\right)B_k \epsilon_k + \sum_{k=k_{\mathrm{max}+1}}^{K}\left(\prod_{j=k+1}^K A_j\right)B_k \epsilon_k
    \end{align*} 
\end{proof}

\begin{theorem}
    \label{thm:dimensions}
    Let $\eta_{KL}(q_{t\mid s})$ be the single-dimensional SDPI contraction coefficient. For multi-dimensional sequences, we consider ground-truth $p_t^{1:D}$, perturbation $q_t^{1:D}$, and product transition $q_{t\mid s}^{\otimes D}$. 

    Defining the total correlation as $\mathrm{TC}(p_s^{1:D}) = D_{KL}(p_s^{1:D}\Vert \prod_{d=1}^Dp_s^d)$ and cross total correlation as $\mathrm{CrossTC}_{p_s^{1:D}}(q_s^{1:D}) = \mathbb{E}_{q_s^{1:D}}\left[\log(\frac{p_s^{1:D}}{\prod_{d=1}^Dp_s^d})\right]$, we have the following upper bound:
    \begin{align*}
        D_{KL}(q_t^{1:D}\Vert p_t^{1:D}) &\leq \eta_{KL}(q_{t\mid s})\left(D_{KL}(q_s^{1:D}\Vert p_s^{1:D})-\mathrm{TC}(q_s^{1:D})+\mathrm{CrossTC}_{p_s^{1:D}}(q_s^{1:D})\right) \\
        &+ \mathrm{TC}(q_t^{1:D}) - \mathrm{CrossTC}_{p_t^{1:D}}(q_t^{1:D}) 
    \end{align*}
\end{theorem}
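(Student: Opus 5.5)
The plan is to decompose the joint KL divergence into per-coordinate KL terms plus correction terms involving total correlation. Then I apply the single-dimensional SDPI coordinatewise, using that the product channel $q_{t\mid s}^{\otimes D}$ acts independently on each coordinate, so the $d$-th marginal at time $t$ is obtained from the $d$-th marginal at time $s$ through $q_{t\mid s}$ alone.

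The first step is an exact identity valid at any time $r\in\{s,t\}$. Write
\[
\log\frac{q_r^{1:D}}{p_r^{1:D}} = \log\frac{q_r^{1:D}}{\prod_d q_r^d} + \sum_d \log\frac{q_r^d}{p_r^d} - \log\frac{p_r^{1:D}}{\prod_d p_r^d},
\]
and take the expectation under $q_r^{1:D}$. This yields
\[
D_{KL}(q_r^{1:D}\Vert p_r^{1:D}) = \mathrm{TC}(q_r^{1:D}) + \sum_{d=1}^D D_{KL}(q_r^d\Vert p_r^d) - \mathrm{CrossTC}_{p_r^{1:D}}(q_r^{1:D}).
\]
The middle term uses that the expectation of a function of $x^d$ under $q_r^{1:D}$ equals its expectation under the marginal $q_r^d$.

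The second step is to rearrange the identity at $r=s$:
\[
\sum_d D_{KL}(q_s^d\Vert p_s^d) = D_{KL}(q_s^{1:D}\Vert p_s^{1:D}) - \mathrm{TC}(q_s^{1:D}) + \mathrm{CrossTC}_{p_s^{1:D}}(q_s^{1:D}).
\]
This is exactly the bracketed quantity in the statement.

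The third step handles the marginals. Since the transition factorizes, $q_t^d = q_{t\mid s}q_s^d$ and $p_t^d = q_{t\mid s}p_s^d$. The definition of $\eta_{KL}(q_{t\mid s}) = \sup_{p}\eta_{KL}(p,q_{t\mid s})$, as used in Theorem~\ref{thm:contraction}, then gives
\[
D_{KL}(q_t^d\Vert p_t^d)\le \eta_{KL}(q_{t\mid s})\,D_{KL}(q_s^d\Vert p_s^d)
\]
for each $d$. Here I use the worst-case coefficient rather than $\eta_{KL}(p_s^d,q_{t\mid s})$, since the input marginals $p_s^d$ differ across coordinates. Summing over $d$ and substituting the second step bounds $\sum_d D_{KL}(q_t^d\Vert p_t^d)$ by $\eta_{KL}(q_{t\mid s})$ times the bracket.

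The final step applies the identity at $r=t$, which gives
\[
D_{KL}(q_t^{1:D}\Vert p_t^{1:D}) = \sum_d D_{KL}(q_t^d\Vert p_t^d) + \mathrm{TC}(q_t^{1:D}) - \mathrm{CrossTC}_{p_t^{1:D}}(q_t^{1:D}),
\]
and inserts the bound from the third step, which yields the claimed inequality.

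The main obstacle is finiteness and well-definedness rather than any inequality. $\mathrm{CrossTC}$ is finite only if $q_r^{1:D}$ is absolutely continuous with respect to $p_r^{1:D}$ and to $\prod_d p_r^d$. I would assume $D_{KL}(q_s^{1:D}\Vert p_s^{1:D})<\infty$, which propagates through the channel and implies the support conditions on the marginals. If needed, I would state the convention that the identity is read with all terms finite, which holds on a finite state space under this support assumption.
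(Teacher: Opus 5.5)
Your proposal is correct and follows the paper's proof essentially step for step. Like the paper, you use the identity $D_{KL}(q_r^{1:D}\Vert p_r^{1:D}) = \sum_d D_{KL}(q_r^d\Vert p_r^d) + \mathrm{TC}(q_r^{1:D}) - \mathrm{CrossTC}_{p_r^{1:D}}(q_r^{1:D})$ at $r=t$, apply coordinatewise SDPI with the worst-case coefficient $\eta_{KL}(q_{t\mid s})$ to the marginals (which pass through $q_{t\mid s}$ alone), and use the same identity at $r=s$ to rewrite $\sum_d D_{KL}(q_s^d\Vert p_s^d)$; your explicit derivation of the identity and your finiteness/absolute-continuity remark are details the paper leaves implicit.
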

\begin{proof}  
    To prove the bound, we use the identity 
    \begin{align*}
        D_{KL}(q_t^{1:D}\Vert p_t^{1:D}) &= \sum_{d=1}^DD_{KL}(q_t^d\Vert p_t^d) + \mathrm{TC}(q_t^{1:D}) - \mathrm{CrossTC}_{p_t^{1:D}}(q_t^{1:D})
    \end{align*}

    Applying the per-step SDPI contraction, we find that
     \begin{align*}
        D_{KL}(q_t^{1:D}\Vert p_t^{1:D}) &\leq \eta_{KL}(q_{t\mid s})\left(\sum_{d=1}^DD_{KL}(q_s^d\Vert p_s^d)\right) + \mathrm{TC}(q_t^{1:D}) - \mathrm{CrossTC}_{p_t^{1:D}}(q_t^{1:D}) \\
        &\leq \eta_{KL}(q_{t\mid s})\left(D_{KL}(q_s^{1:D}\Vert p_s^{1:D})-\mathrm{TC}(q_s^{1:D})+\mathrm{CrossTC}_{p_s^{1:D}}(q_s^{1:D})\right) \\
        &+ \mathrm{TC}(q_t^{1:D}) - \mathrm{CrossTC}_{p_t^{1:D}}(q_t^{1:D}) 
    \end{align*}

    In fact for product input distributions $p_s^{1:D} = p_s^{\otimes D}$ we have $ \eta_{KL}(p_s^{\otimes D},q_{t\mid s}^{\otimes D}) = \eta_{KL}(p_s,q_{t\mid s})$~\citep{PolyanskiyWu}, where the error is contractive in this case.
\end{proof}
\section{Algorithms}\label{section:alg}

\subsection{EDM Discretization Scheme}
To improve sampling quality \citet{EDM} proposed a heuristic time discretization scheme that decreases the step sizes towards the end of the generation process. We also adopt this scheme with $\rho = 7$ when sampling with DCRS, where the discretization scheme is applied to the overall time interval $[0,1]$, as well as within the restarted interval $[t_{\text{min}},t_{\text{max}}]$. We present the exact mathematical formulation of the scheme in Algorithm \ref{alg:discretization_scheme}.
\begin{algorithm}[H]
   \caption{EDM\_Discretization\_Scheme$(t_{\text{min}},t_{\text{max}},N,\rho)$}
   \label{alg:discretization_scheme}
\begin{algorithmic}[1]
    \STATE \textbf{return} $\left\{\left(t_{\text{max}}^{\frac{1}{\rho}} + \frac{i}{N-1}\left(t_{\text{min}}^{\frac{1}{\rho}} - t_{\text{max}}^{\frac{1}{\rho}}\right)\right)^{\rho}\right\}_{i=0}^{N-1}$
\end{algorithmic}
\end{algorithm}

\subsection{Discrete Churn and Restart Sampling}

DCRS consists of running the reverse process through $\tau$-leaping and running the forward process through sampling from the closed-form categorical distribution defined by the transition matrix $q_{t_{\text{max}}\mid t_{\text{min}}}$. We present pseudocode for both restarting with the forward process and $\tau$-leaping in Algorithms~\ref{alg:restart_step} and~\ref{alg:poisson_step}, respectively.
\begin{algorithm}[H]
   \caption{Restart\_Step$(x_{t_\text{min}},q_{t_{\text{max}}\mid t_{\text{min}}})$}
   \label{alg:restart_step}
\begin{algorithmic}[1]
    \STATE $x_{t_{\max}} \sim \text{Cat}(x_{t_{\text{min}}}q_{t_{\text{max}}\mid t_{\text{min}}})$
    \STATE \textbf{return} $x_{t_{\text{max}}}$
\end{algorithmic}
\end{algorithm}

\begin{algorithm}[H]
   \caption{$\tau$-Leaping\_Step$(t_i,t_{i+1},x_{t_i},\hat{R}_t(\cdot))$}
   \label{alg:poisson_step}
\begin{algorithmic}[1]
    \FOR{$y \in B_1(x_{t_i})$}
        \STATE $P_{y,i} \sim \text{Poisson}((t_{i+1} - t_i)\hat{R}_t(y,x_{t_i}))$ 
    \ENDFOR
    \STATE $x_{t_{i+1}} = x_i + \sum_{y\in B_q(x_{t_i})}P_{y,i}(y - x_i)$
    \STATE $x_{t_{i+1}} = \text{Clamp}\{x_{t_{i+1}},\text{min} = 1, \text{max} = S\}$
    \STATE \textbf{return} $x_{t_{i+1}}$
\end{algorithmic}
\end{algorithm}

We also explicitly define churning steps in Algorithm~\ref{alg:churn_step}, which injects a small amount of noise before every step from a solver of choice, such as $\tau$-leaping or trapezoidal steps:
\begin{algorithm}[H]
   \caption{Churn\_Step$(t_i,x_{t_i})$}
   \label{alg:churn_step}
\begin{algorithmic}[1]
    \STATE $t_{\text{churn}} = (1 + \gamma)t_{i}$
    \STATE $x_{(1 + \gamma)t_{i}} \sim q_{{(1 + \gamma)t_{i}}\mid t_{i}}(\cdot \mid x_{t_{i}})$
    \STATE \textbf{return} $t_{\text{churn}},x_{t_{\text{churn}}}$
\end{algorithmic}
\end{algorithm}

\subsection{Higher-Order Solver}

As previously noted, $\tau$-leaping steps produce errors on the order of $O(\Delta t^2)$, rendering the algorithm first-order accurate. 
To obtain a second-order accurate solver, \citet{RKT} proposes using trapezoidal iterations inspired by SDE solvers. The iterations evaluate the reverse rate matrix both at the beginning of the interval and at an intermediate step. The step is then performed, taking a weighted combination of reverse rate matrix evaluations at both steps, and can be used as a plugin replacement for $\tau$-leaping. For full details of the trapezoidal solver, we refer to Algorithm \ref{alg:trapezoidal_step}.

\vspace{-1em}

\begin{algorithm}[H]\caption{Trapezoidal\_Step$(t_i,t_{i+1},x_{t_i},\hat{R}_t(\cdot),\theta)$}
   \label{alg:trapezoidal_step}
\begin{algorithmic}[1]
    \STATE $t_{\text{mid}} = \theta(t_{i+1}-t_i) + t_i$
    \STATE $\alpha = \frac{1}{2\theta(1-\theta)}$
    \STATE $x_{t_{\text{mid}}} = x_{t_i} + \text{$\tau$-Leaping\_Step}(t_{i},t_{\text{mid}},\hat{R}_t(x_{t_i}))$
    \STATE $x_{t_{i+1}} = x_{t_{\text{mid}}} + \text{$\tau$-Leaping\_Step}(t_{\text{mid}},t_{i+1},((\alpha-1)\hat{R}_t(x_{t_i})+\alpha\hat{R}_t(x_{t_{\text{mid}}}))_+)$
    \STATE \textbf{return} $x_{t_{i+1}}$
\end{algorithmic}
\end{algorithm}

\subsection{Discrete Churn and Restart Sampling Algorithm}
We present the full pseudocode for DCRS, where we denote the main hyperparameters of interest $N_{\text{Main}}$ and $N_{\text{Restart},j}$ to indicate the number of discretization steps used in the overall main interval $[t_{\text{stop}},1]$ and the $j$th restarted intervals $[t_{\text{min},j},t_{\text{max},j}]$, respectively, for up to $l$ different restarted intervals. We also introduce the hyperparameter $K_j$ to indicate the number of restarts performed for the $j$th restart interval. 

\begin{algorithm}[H]
   \caption{Discrete\_Churn\_and\_Restart\_Sampling$(\pi,\rho,t_{\text{stop}},\hat{R}_t(\cdot),N_{\text{Main}},(N_{\text{Restart},j},K_j,t_{\text{min},j},t_{\text{max},j})_{j=1}^l)$}
   \label{alg:discrete_restart_sampling}
\begin{algorithmic}[1]
    \STATE $\{t_{i}\}_0^{N_{\text{Main}}-1} = \text{EDM\_Discretization\_Scheme}(t_{\text{min}},1,N_{\text{Main}},\rho)$
    \STATE Round $\{t_{\text{min},j}\}_{j=1}^l$ to the nearest neighbor in $\{t_i\}_{i=0}^{N_{\text{Main}}-1}$
    \STATE $x_{t_0} \sim \text{Cat}(\pi)$
    \FOR{$i = 0,\ldots,N_{\text{Main}}-1$}
        \STATE $x_{t_{i+1}} = \text{$\tau$-Leaping\_Step}(t_i,t_{i+1},x_{t_i},\hat{R}_t(\cdot),\frac{1}{2})$
        \IF{$\exists j \in\{1,\ldots,l\}, t_{i+1} = t_{\text{min},j}$}{
            \STATE $\{t_{m}\}_0^{N_{\text{Restart},j}-1} = \text{EDM\_Discretization\_Scheme}(t_{\text{min},j},t_{\text{max},j},N_{\text{Restart},j},\rho)$
            \STATE $x_{t_\text{min}}^0 = x_{t_{i+1}}$
            \FOR{$k = 0,\ldots,K_j-1$}
                \STATE $x_{t_0}^{k+1} = \text{Restart\_Step}(t_{\text{min},j},t_{\text{max},j},x_{t_{\text{min},j}}^k,q_{t_{\text{max,j}}\mid t_{\text{max,j}}})$
                \FOR{$m = 0,\ldots,N_{\text{Restart},j}-1$}
                    \STATE $t_{\text{churn}},x_{t_{\text{churn}}} = \text{Churn\_Step}(t_m,x_{t_m}^{k+1})$
                    \STATE $x_{t_{m+1}}^{k+1} = \text{Trapezoidal\_Step}(t_{\text{churn}},t_{m+1},x_{t_\text{churn}},\hat{R}_t(\cdot),\frac{1}{2})$
                \ENDFOR
            \ENDFOR
            \STATE $x_{t_{i+1}} = x_{t_{\text{min},j}}^{K_j-1}$
        }
        \ENDIF
    \ENDFOR
\end{algorithmic}
\end{algorithm}

%%%%%%%%%%%%%%%%%%%%%%%%%%%%%%%%%%%%%%%%%%%%%%%%%%%%%%%%%%%%%%%%%%%%%%%%%%%%%%%
%%%%%%%%%%%%%%%%%%%%%%%%%%%%%%%%%%%%%%%%%%%%%%%%%%%%%%%%%%%%%%%%%%%%%%%%%%%%%%%

\section{Additional Experimental Details}
\subsection{Tables}
\begin{table}[htbp]
\centering
\begin{tabular}{lcc}
\hline
 & NFE & FID \\
\hline
DPF & 8 & 150.99 \\
 & 16 & 51.66 \\
 & 32 & 33.66 \\
 & 64 & 39.43 \\
 & 128 & 43.29 \\
 & 256 & 45.22 \\
 & 512 & 46.47 \\
\hline
$\tau$-Leaping & 8 & 366.74 \\
 & 16 & 330.90 \\
 & 32 & 316.35 \\
 & 64 & 151.22 \\
 & 128 & 21.73 \\
 & 256 & 7.11 \\
 & 512 & 6.09 \\
\hline
DCRS & 12 & 31.33 \\
 & 26 & 10.80 \\
 & 54 & 11.37 \\
 & 110 & 14.01 \\
 & 222 & 14.01 \\
 & 446 & 16.69 \\
\hline
\end{tabular}
\caption{CIFAR10}
\label{tab:cifar10}
\end{table}

\begin{table}[htbp]
\centering
\begin{tabular}{lcc}
\hline
 & NFE & FID \\
\hline
DPF & 8 & 98.96 \\
 & 16 & 45.22 \\
 & 32 & 30.61 \\
 & 64 & 33.55 \\
 & 128 & 36.39 \\
 & 256 & 38.15 \\
 & 512 & 39.07 \\
\hline
$\tau$-Leaping & 8 & 482.36 \\
 & 16 & 467.12 \\
 & 32 & 390.57 \\
 & 64 & 281.28 \\
 & 128 & 65.64 \\
 & 256 & 22.46 \\
 & 512 & 17.08 \\
\hline
DCRS & 12 & 53.52 \\
 & 26 & 20.44 \\
 & 54 & 15.90 \\
 & 110 & 19.71 \\
 & 222 & 23.19 \\
 & 446 & 25.33 \\
\hline
\end{tabular}
\caption{CelebA}
\label{tab:celeba}
\end{table}
\begin{comment}
\begin{table}[htbp]
\centering
\begin{tabular}{lcc}
\hline
 & NFE & $W_1$ Distance \\
\hline
DPF & 4 & 0.70 \\
 & 8 & 0.60 \\
 & 16 & 0.60 \\
 & 32 & 0.62 \\
 & 64 & 0.58 \\
 & 128 & 0.59 \\
 & 256 & 0.58 \\
 & 512 & 0.57 \\
\hline
$\tau$-Leaping & 4 & 2.60 \\
 & 8 & 2.56 \\
 & 16 & 2.48 \\
 & 32 & 2.45 \\
 & 64 & 2.02 \\
 & 128 & 1.00 \\
 & 256 & 0.42 \\
 & 512 & 0.41 \\
\hline
DCRS & 4 & 0.81 \\
 & 8 & 0.50 \\
 & 16 & 0.43 \\
 & 32 & 0.41 \\
 & 64 & 0.40 \\
 & 128 & 0.41 \\
 & 256 & 0.40 \\
 & 512 & 0.39 \\
\hline
\end{tabular}
\caption{Uniform Gaussians 2D}
\label{tab:uniformgaussians2D}
\end{table}
\end{comment}

\begin{table}[htbp]
\centering
\begin{tabular}{lcc}
\hline
 & NFE & $W_1$ Distance \\
\hline
DPF & 4 & 0.70 \\
 & 8 & 0.60 \\
 & 16 & 0.60 \\
 & 32 & 0.62 \\
 & 64 & 0.58 \\
 & 128 & 0.59 \\
 & 256 & 0.58 \\
 & 512 & 0.57 \\
\hline
$\tau$-Leaping & 4 & 2.60 \\
 & 8 & 2.56 \\
 & 16 & 2.48 \\
 & 32 & 2.45 \\
 & 64 & 2.02 \\
 & 128 & 1.00 \\
 & 256 & 0.42 \\
 & 512 & 0.41 \\
\hline
DCRS & 4 & 0.81 \\
 & 8 & 0.50 \\
 & 16 & 0.43 \\
 & 32 & 0.41 \\
 & 64 & 0.40 \\
 & 128 & 0.41 \\
 & 256 & 0.40 \\
 & 512 & 0.39 \\
\hline
\end{tabular}
\caption{Uniform Gaussians 8D}
\label{tab:uniformgaussians8D}
\end{table}

\begin{table}[htbp]
\centering
\begin{tabular}{lcc}
\hline
 & Restart Iteration & FID \\
\hline
DCRS, $\rho = 1$ & 0 & 150.99 \\
 & 1 & 71.72 \\
 & 2 & 57.62 \\
 & 3 & 53.50 \\
 & 4 & 53.67 \\
 & 5 & 55.66 \\
 & 6 & 59.89 \\
 & 7 & 64.93 \\
 & 8 & 69.94 \\
 & 9 & 76.33 \\
 & 10 & 82.31 \\
DCRS, $\rho = 7$ & 0 & 132.42 \\
 & 1 & 33.49 \\
 & 2 & 32.51 \\
 & 3 & 45.72 \\
 & 4 & 62.68 \\
 & 5 & 77.07 \\
 & 6 & 91.68 \\
 & 7 & 103.15 \\
 & 8 & 113.17 \\
 & 9 & 121.19 \\
 & 10 & 128.06 \\
\hline
\end{tabular}
\caption{CIFAR 10K Restart Iteration Ablation}
\label{tab:iterations}
\end{table}

\begin{table}[htbp]
\centering
\begin{tabular}{lcc}
\hline
 & NFE & FID \\
\hline
$\tau$-Leaping, $\rho = 1$ & 8 & 367.12 \\
 & 16 & 331.56 \\
 & 32 & 318.72 \\
 & 64 & 152.69 \\
 & 128 & 23.85 \\
 & 256 & 9.37 \\
 & 512 & 8.13 \\
\hline
 $\tau$-Leaping, $\rho=7$ & 8 & 369.90 \\
 & 16 & 293.64 \\
 & 32 & 133.66 \\
 & 64 & 121.60 \\
 & 128 & 88.93 \\
 & 256 & 32.00 \\
 & 512 & 11.57 \\
\hline
Trapezoidal, $\rho=1$ & 9 & 424.01 \\
 & 17 & 416.98 \\
 & 33 & 357.61 \\
 & 65 & 334.80 \\
 & 129 & 212.43 \\
 & 257 & 63.34 \\
 & 513 & 23.94 \\
\hline
Trapezoidal, $\rho=7$  & 9 & 428.24 \\
 & 17 & 390.73 \\
 & 33 & 390.73 \\
 & 65 & 90.62 \\
 & 129 & 63.56 \\
 & 257 & 47.01 \\
 & 513 & 19.55 \\
\hline
\end{tabular}
\caption{CIFAR 10K Discretization / Higher-Order Solver Ablation}
\label{tab:ablation}
\end{table}

\newpage
\subsection{Sampling Details}
Previous works have noted that the discrete score function $\frac{p_t(y)}{p_t(x)}$ becomes poorly conditioned as $t$ approaches $0$. This is attributed to the data distribution behaving nearly deterministically conditioned on the other tokens, making the discrete score function essentially singular \citet{TLDR}. To mitigate this effect, iterations are typically stopped at some minimum time $t_{\text{stop}} = 10^{-3}$, which we also employ.

Because absorbing diffusion models tend not to fully unmask all tokens after sampling, we perform an additional model evaluation at $t_{\text{stop}}$ and sample from the categorical distribution $p_{t_{\text{stop}}}(x^d\mid x^{\setminus d})$.
\subsection{Toy Datasets}
\subsubsection{1D Dataset}
\label{section:1D}
Following~\citep{RKT}, we choose a ground-truth distribution over states uniformly at random from the simplex, where the number of states is set to $S = 15$.  We analytically derive the posterior $p_{0\mid t}$, where performance under model approximation errors is tested through random perturbations of the score function by a multiplicative constant drawn from $\text{Uniform}(0,1)$ at every iteration, where $t \in (0,0.1)$. The KL divergence is evaluated over 1 million samples for each experimental iteration for a given NFE, sampler, and noise schedule.

For the 1D dataset with 15 states, we employ commonly used noise schedules $\beta_t$ applied to a uniform rate matrix:
\paragraph{Linear.} The linear noise schedule simply assumes time homogeneity of the rate matrix:
\begin{align*}
    \beta_t &= 1 \\
    \int_0^t\beta(s)ds &= t
\end{align*}
\paragraph{Geometric.} The geometric noise schedule defines an exponentially growing rate with exponential base $b$ and constant factor $a$:
\begin{align*}
    \beta_t &= a(b^t)\ln(b) \\
    \int_0^t\beta(s)ds &= a(b^t)-a
\end{align*}
where we choose $a = 3$ and $b = 100$ for our experiments.
\paragraph{Loglinear.} The loglinear noise schedule is defined below, where we add an additional factor $a$ to accentuate the sensitivity of samplers to model approximation errors:
\begin{align*}
    \beta_t &= \frac{1-\epsilon}{1-(1-\epsilon)t} \\
    \int_0^t\beta(s)ds &= -a\ln(1-(1-\epsilon)t)
\end{align*}
where we set $\epsilon = 10^{-3}$ and $a = 10$ for our experiments.

We provide a full comparison of all sampling configurations and experimental conditions in Figure~\ref{fig:noise_schedule}.

\begin{figure}[htbp]
    \centering
    \includegraphics[width=\linewidth]{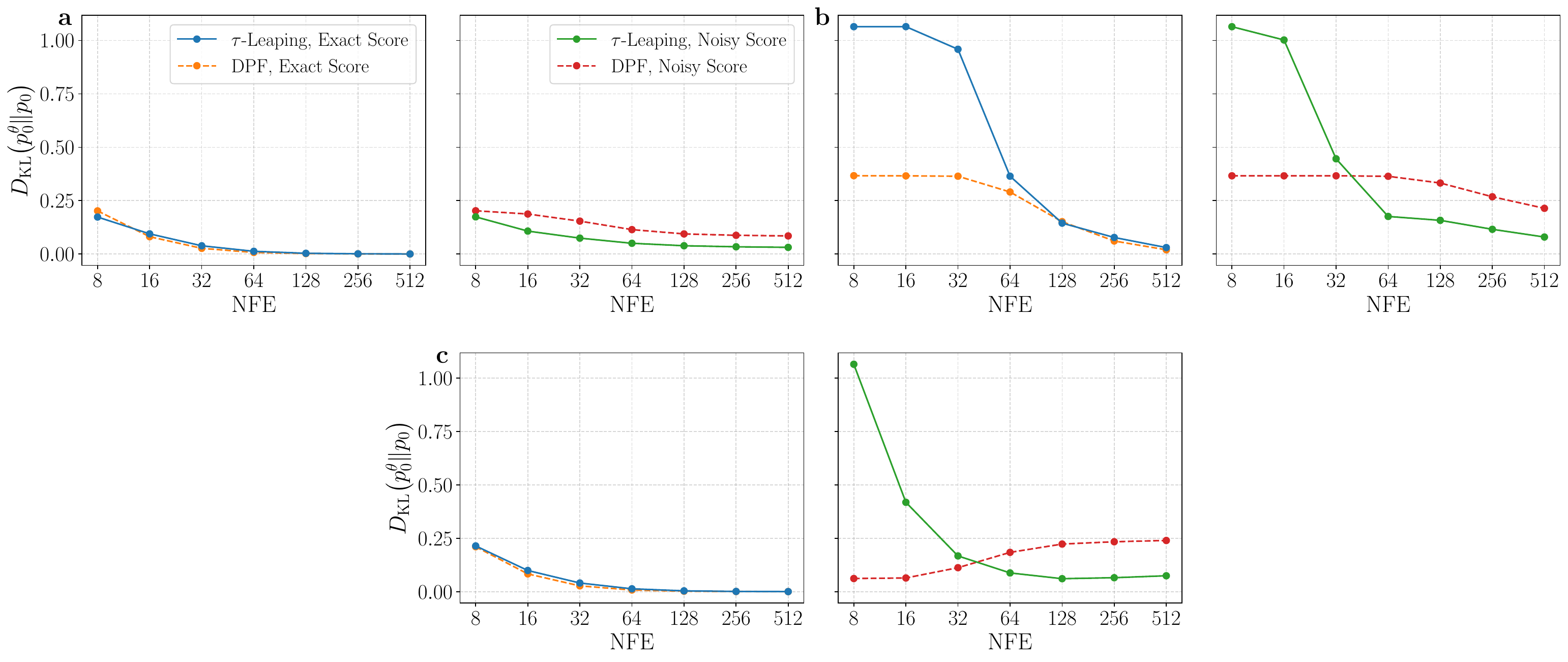}
    \caption{Empirical KL divergence between the ground-truth distribution $p_{\mathrm{data}}$ and generated distribution $\hat{p}_0$ for different noise schedules $\beta_t$: \textbf{(a)} linear, \textbf{(b)} geometric, and \textbf{(c)} loglinear. All experiments assume a uniform corruption process, where sampling performance is compared between using the exact discrete score function and a perturbed noisy score function.}
    \label{fig:noise_schedule}
\end{figure}
\subsubsection{Synthetic Dataset}
We use a mixture of Gaussians dataset containing 8 modes in 2 dimensions. We also present results on a separate synthetic dataset projected into 8 dimensions by randomly sampling a fixed projection matrix $P$ with independent standard Gaussian entries ($P_{ij} \sim \mathcal{N}(0,1)$). To preserve the scale of the dataset, the columns of $P$ are normalized to have unit length. Following \citet{DFOT,SDDM}, each coordinate is then quantized and is subsequently mapped onto 32-bit binary strings according to a Gray code. 

\label{section:details}
\subsubsection{Experimental Setup}
For all experiments, we use the Adam optimizer with a learning rate of 1e-4 and a batch size of 128. We parameterize models according to \citet{SDDM} and use a 3-layer MLP. An EMA was applied to all model weights with a constant of 0.9999. Instead of parameterizing the denoising posterior, the models instead learn the conditional distribution of each dimension $p_t(x^d\mid x^{\setminus d})$. By evaluating the model $D\times S$ times for a sequence of length $D$ with state space size $S$, we can evaluate the discrete score $s_{\theta}(x^d,y^d) \approx \frac{p_t(y^d\mid x^{\setminus d})}{p_t(x^d\mid x^{\setminus d})}$.

All models are trained using a single NVIDIA A6000 GPU for $100,000$ iterations. For absorbing state diffusion models, we also pass the model inputs through an additional embedding matrix. Following \citet{TLDR}, we add a noise schedule $\beta_t = ab^t\log(b)$ such that $R(t) = \beta_tR_b$, where $R_b$ is a constant matrix. For all experiments, we use $a = 3$ and $b = 100$. The divergence in behavior between the DPF and $\tau$-leaping sampler tends to be accentuated using nonlinear noise schedules $\beta_t$.

\subsubsection{Sample Visualization}
After converting from binary strings back to floating point values, we measure sample quality through the Wasserstein-1 distance under the Euclidean distance metric. Performance comparisons are made between 10000 samples from both the ground-truth and learned distributions.

\begin{figure*}[htbp]
    \centering 
    \includegraphics[width=\linewidth]{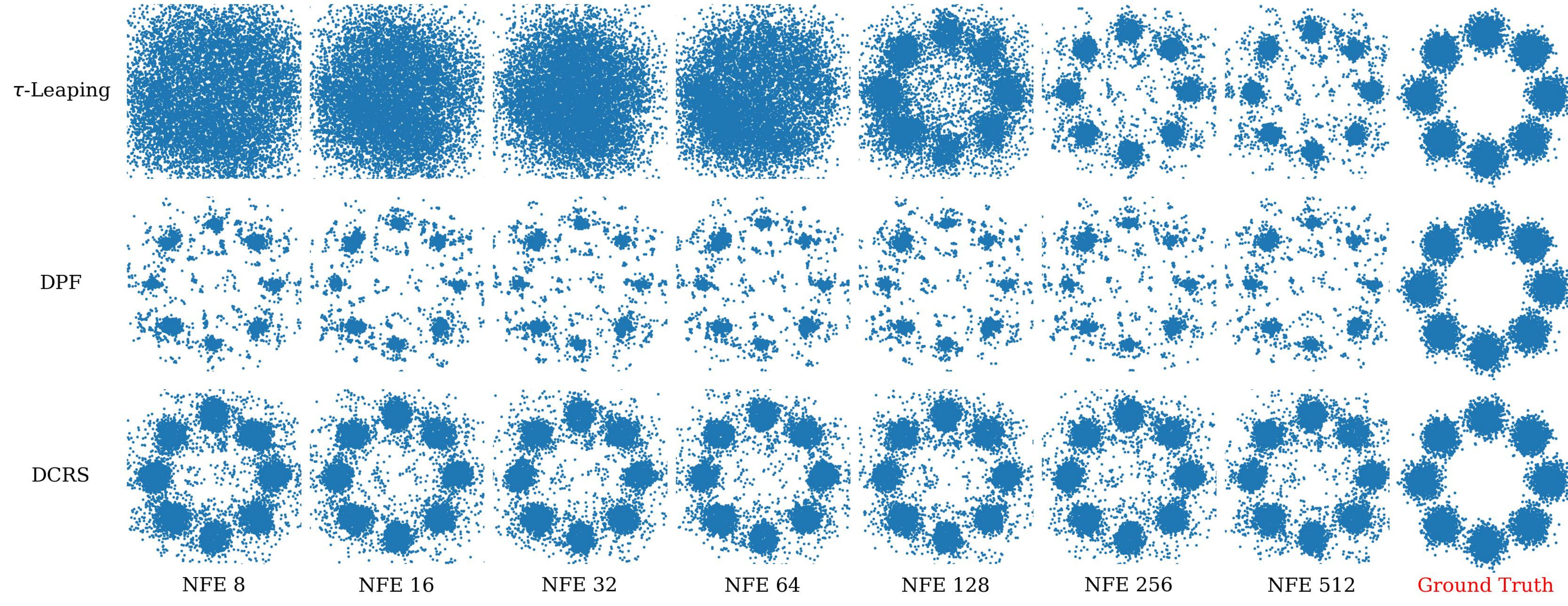}
    \vspace{-1.5em}
    \caption{Plots of 10K generated samples as the number of discretization steps is increased for a toy mixture of Gaussians dataset in 8D from 2D
    experiments trained using a uniform corruption process. Whereas the $\tau$-leaping sampler converges slowly to the modes, the DPF sampler reaches the modes immediately but distributes mass unevenly. The best balance between sampling speed and quality is achieved with DCRS.}
    \label{fig:uniform8D_trajectory}
    \vspace{-1.5em}
\end{figure*}

\subsection{Images}
\subsubsection{Experimental Setup}
For the CIFAR10 and CelebA datasets, we use pretrained checkpoints provided by \citet{TLDR} and \citet{DFOT}. All experiments are conducted on $8$ NVIDIA A6000 GPUs. To enable comparisons between sampling algorithms, each sample is generated starting from a unique fixed seed.

\subsubsection{Effect of Multiple Restart Iterations}
We provide results using a time discretization of $8$ steps and a restart window of $[t_{\mathrm{min},\mathrm{max}}] = [0.7,0.8]$ with $3$ discretization steps. All discretization schemes are found using Algorithm \ref{alg:discretization_scheme} with $\rho = 1$ and $\rho = 7$, which corresponds to taking uniformly spaced steps and steps that are clustered near $t = 0$, respectively. We present the effect on FID over $10000$ generated samples as the number of restart iterations is changed. Our results in Figure~\ref{fig:restart_iterations} illustrate that the sampling quality tends to improve up until a certain number of iterations, after which the sample quality starts degrading due to the compounding of sampling errors. Furthermore, under non-uniform discretizations, such as when $\rho = 7$, the sample quality does not improve much with restart iterations, where multiple restart iterations tend to become harmful.

\begin{figure}[htbp]
  \centering
  \includegraphics[scale=0.5]{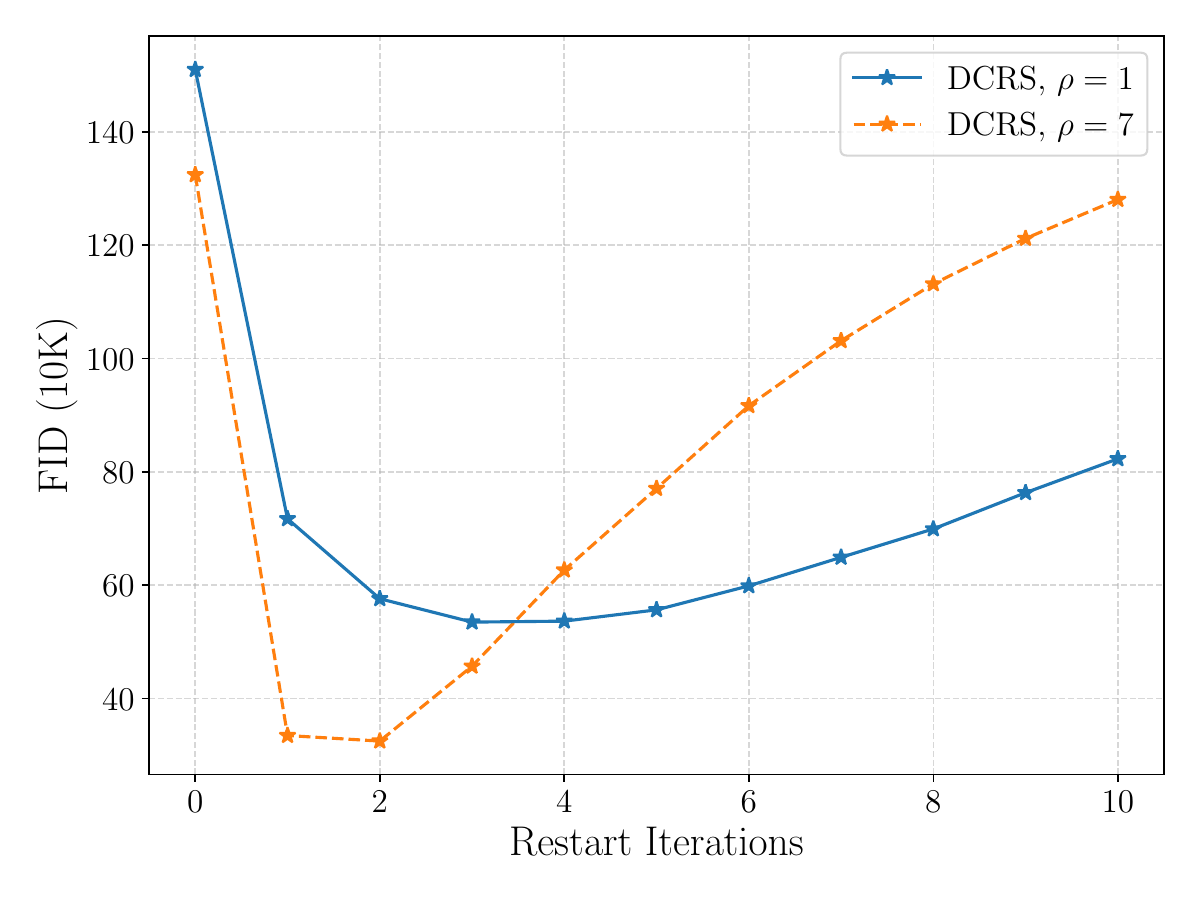}
  \caption{Effect of multiple restart iterations on sampling quality (FID) for a pre-trained model checkpoint trained on CIFAR10.}
  \label{fig:restart_iterations}
\end{figure}
\subsubsection{Failure of Higher-Order Solvers on DPF}
\label{section:failure}
We observe that applying a higher-order solver outside of the restart window tends to severely degrade quality, as shown in Figure~\ref{fig:trapezoid}. In particular, we observe that images tend to saturate in pixel intensities. We hypothesize that the added noise during the restart step tends to cancel out compounding errors under the DPF rate matrix, which are amplified under higher-order solvers. 

\begin{figure}[htbp]
    \centering
    \includegraphics[scale = 2]{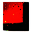}
    \caption{Example sample from running the trapezoidal solver (Algorithm \ref{alg:trapezoidal_step}) using the DPF rate matrix.}
    \label{fig:trapezoid}
\end{figure}

\subsubsection{Ablation of Discretization Scheme and Higher-Order Solver}
\label{section:ablation}
For completeness, we provide additional results that illustrate the impact of higher-order solvers and discretization schemes applied to the $\tau$-leaping sampler in Figure~\ref{fig:ablation}. Our results indicate non-uniform discretization schemes where $\rho = 7$ can improve performance slightly in the low-NFE regime, but ultimately result in worse performance in the high-NFE regime. Furthermore, the higher-order solver seems to be somewhat ineffective, which we attribute to the stochasticity of the default reverse rate matrix. Like the $\tau$-leaping sampler, we expect higher-order solvers to converge faster on the DPF rate matrix, where sample trajectories exhibit fewer transitions. Ultimately, the results suggest that the discretization scheme and solver choices matter primarily when the stochasticity of the rate matrix is low enough.
\begin{figure}[htbp]
  \centering
  \includegraphics[scale=0.5]{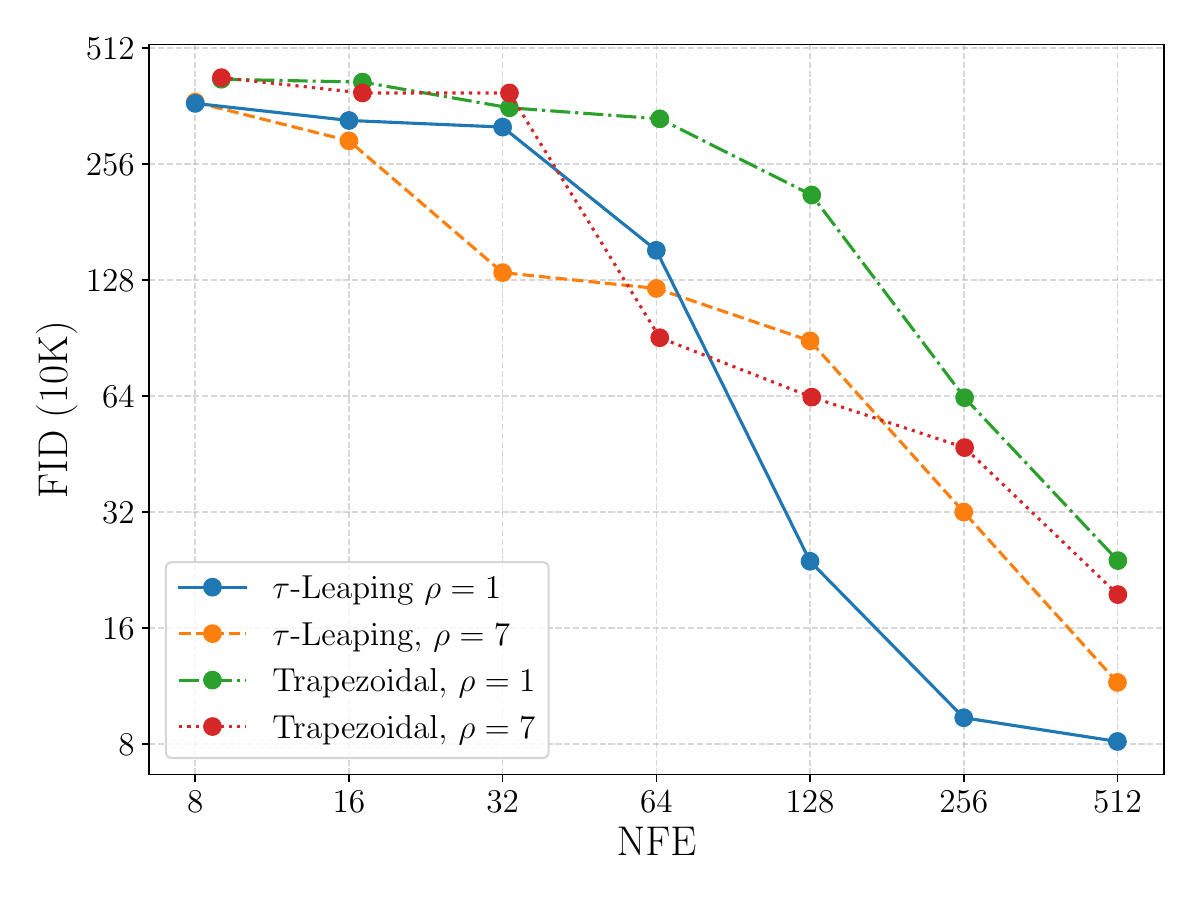}
  \caption{Effect of discretization scheme choices and higher-order solvers with the default $\tau$-leaping sampler on sampling quality (FID) for a pre-trained model checkpoint trained on CIFAR10.}
  \label{fig:ablation}
\end{figure}

\subsubsection{Sample Visualization}
We illustrate the difference in convergence rates for each dataset and choice of sampler in Figures \ref{fig:cifar10_trajectory} and \ref{fig:celeb128_trajectory} below. For each sampler, we use the samples generated by the same fixed seed using different total NFE. 
\begin{figure}[htbp]
    \centering
    % Subfigure 1
    \begin{subfigure}[t]{\linewidth}
        \centering
        \includegraphics[width=\linewidth]{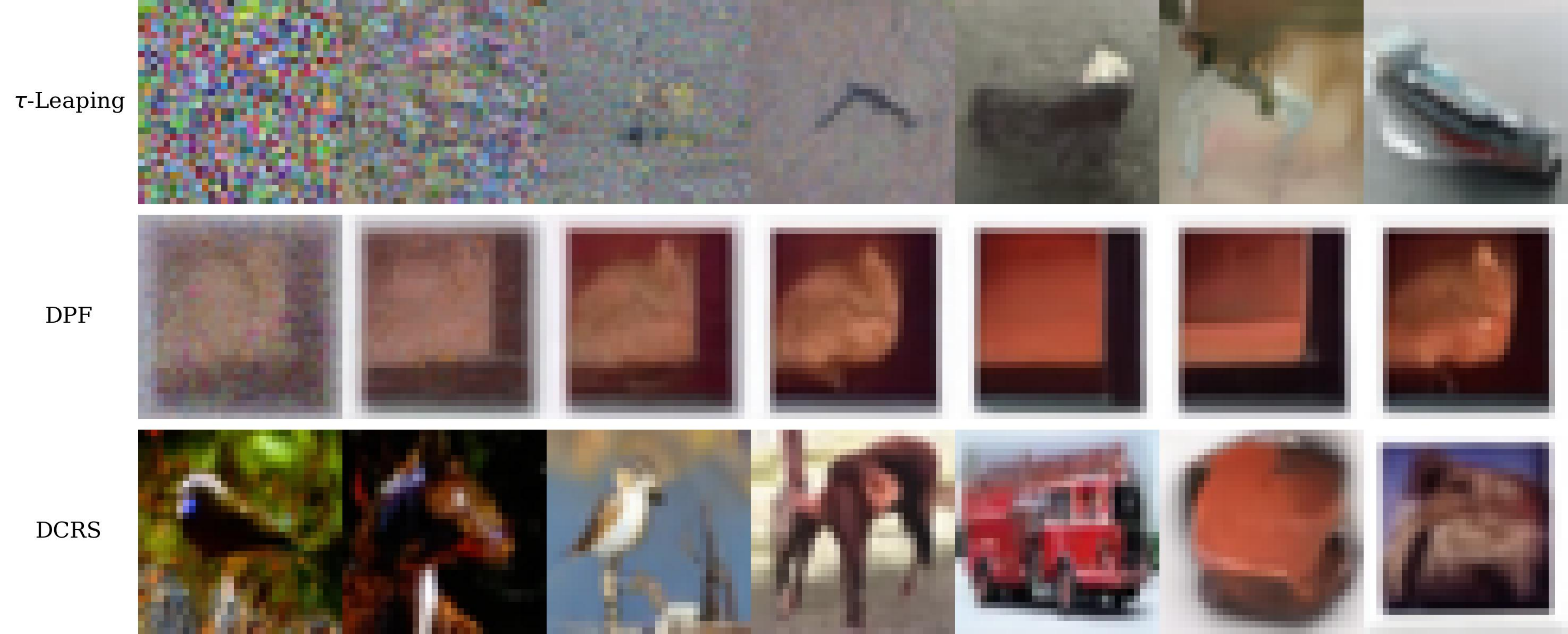}
        \caption{Generated samples using increasing NFE from left to right from a fixed seed for a pre-trained CIFAR10 checkpoint.}
        \label{fig:cifar10_trajectory}
    \end{subfigure}
    \hfill
    % Subfigure 2
    \begin{subfigure}[t]{\linewidth}
        \centering
        \includegraphics[width=\linewidth]{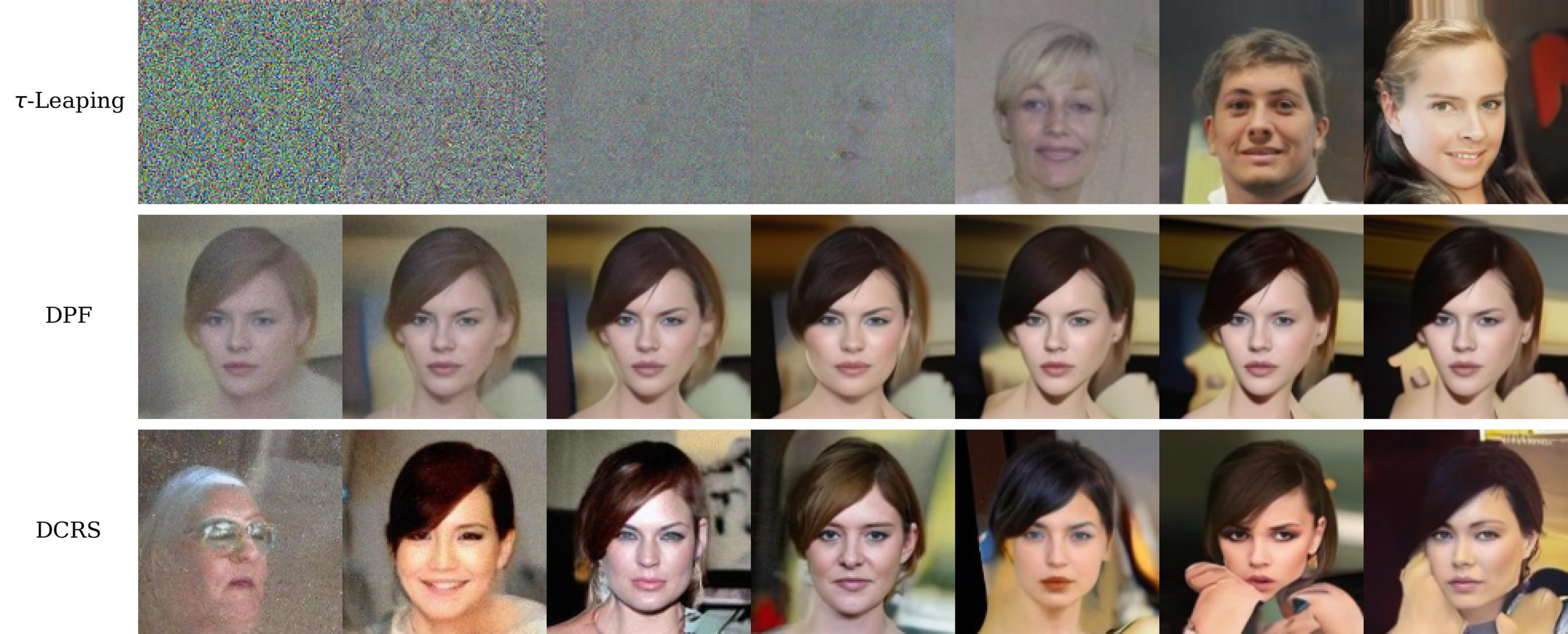}
        \caption{Generated samples using increasing NFE from left to right from a fixed seed for a pre-trained CelebA checkpoint.}
        \label{fig:celeb128_trajectory}
    \end{subfigure}

    \caption{Sample comparisons between the vanilla $\tau$-leaping, DPF, and DCRS sampler for large-scale image datasets.}
    \label{fig:image_trajectory}
\end{figure}
\newpage
\subsection{Language}
\label{section:lang}
\begin{table}[htbp]
\centering
\footnotesize
\begin{tabular}{gccccssss}
\toprule
\multicolumn{1}{l}{Method}
& \multicolumn{4}{c}{Generative PPL. $(\downarrow)$}
& \multicolumn{4}{c}{Entropy $(\uparrow)$} \\
\midrule
& \it{T=16} & \it{T=32} & \it{T=64} & \it{T=128}
& \it{T=16} & \it{T=32} & \it{T=64} & \it{T=128} \\
D3PM 
& 256.42 & 207.75 & 193.91 & 186.79
& 6.806 & 6.849 & 6.865 & 6.886 \\

DDIM$^\dagger$
& 253.60 & 213.57 & 205.21 & 190.66
& 6.831 & 6.859 & 6.891 & 6.898 \\

Euler 
& 272.97 & 225.41 & 197.87 & 190.10
& 6.921 & 6.922 & 6.890 & 6.902 \\

Euler DPF$^\dagger$
& 257.62 & 215.07 & 204.87 & 193.30
& 6.843 & 6.850 & 6.890 & 6.901 \\

\midrule
& \it{T=12} & \it{T=26} & \it{T=44} & \it{T=110} & \it{T=12} & \it{T=26} & \it{T=44} & \it{T=110} \\
Euler DCRS
& 298.44 & 245.06 & 331.60 & 333.73 &
6.704  & 6.823  & 7.080  & 7.115  \\
\bottomrule
\end{tabular}
\vspace{1em}
\caption{Generative perplexity and entropy evaluated for a uniform diffusion language model (UDLM) trained on LM1B over 1024 sequences of length 128. $^\dagger$ indicates a near-deterministic sampler that samples from the DPF CTMC.}
\vspace{-3em}
\label{tab:UDLM}
\end{table}

\end{document}